\crefname{figure}{Figure}{Figures}
\crefname{theorem}{Theorem}{Theorems}
\crefname{appendix}{Appendix}{Appendices}
\newcommand{\roy}[1]{\iftoggle{showcomments}{\textcolor{blue}{[Roy: #1]}}{}}
\newcommand{\sam}[1]{\iftoggle{showcomments}{\textcolor{DarkGreen}{[Sam: #1]}}{}}
\newtheorem{theorem}{Theorem}
\newtheorem{definition}{Definition}
\def\1{\bm{1}}
\DeclareMathAlphabet{\mathsfit}{\encodingdefault}{\sfdefault}{m}{sl}
\SetMathAlphabet{\mathsfit}{bold}{\encodingdefault}{\sfdefault}{bx}{n}
\newcommand{\algo}{\mathcal{A}}
\newcommand{\graddesc}{\texttt{\graddesc}}
\newcommand{\om}{\textsc{om}\xspace}
\newcommand{\sgd}{\textsc{sgd}\xspace}
\newcommand{\gd}{\textsc{gd}\xspace}
\newcommand{\gda}{\textsc{gda}\xspace}
\newcommand{\omrs}{\textsc{omrs}\xspace}
\newcommand{\dmm}{\textsc{dmm}\xspace}
\newcommand{\trak}{\textsc{trak}\xspace}
\newcommand{\klom}{\texttt{KLoM}\xspace}
\newcommand{\dmdirect}{\textsc{dm-direct}\xspace}
\newcommand{\GD}{\textsc{GD}\xspace}
\newcommand{\ULIRA}{\texttt{U-LiRA}\xspace}
\newcommand{\ulira}{\texttt{U-LiRA}\xspace}
\newcommand{\EfficientULIRA}{Efficient-ULIRA\xspace}
\newcommand{\bbE}[1]{\mathbb{E}\left[#1\right]}
\definecolor{DarkGreen}{rgb}{0.1,0.5,0.1}
\title{Attribute-to-Delete: \\ Machine Unlearning via Datamodel Matching}
\author{Kristian Georgiev\footnote{Equal contribution} $^{1}$, 
Roy Rinberg\footnotemark[1] $^{2,3}$, Sung Min Park\footnotemark[1] $^{4}$\footnote{Work done primarily at MIT EECS.}\ , Shivam Garg\footnotemark[1] $^{5}$\footnote{Work done primarily at Harvard Business School.} \\[.2em]
Andrew Ilyas$^{6}$\footnotemark[2]\ ,
Aleksander M\k{a}dry$^{1}$, \
Seth Neel$^{2}$ \\[1em]
{\normalsize $^1$MIT EECS \ \ \
$^2$Harvard Business School \ \ \
$^3$Harvard SEAS \ \ \
$^4$Stanford CS} \\
{\normalsize $^5$Microsoft Research  \ \ \
$^6$Stanford Statistics}}
\date{}
\begin{document}
\maketitle

\setcounter{tocdepth}{2}
\doparttoc %
\renewcommand\ptctitle{}
\faketableofcontents %

\begin{abstract}
Machine unlearning---efficiently removing the effect of a small "forget set" of training data on a pre-trained machine learning model---has recently attracted significant research interest. Despite this interest, however, recent work shows that existing machine unlearning techniques do not hold up to thorough evaluation in non-convex settings. In this work, we introduce a new machine unlearning technique that exhibits strong empirical performance even in such challenging settings. Our starting point is the perspective that the goal of unlearning is to produce a model whose outputs are {\em statistically indistinguishable} from those of a model re-trained on all but the forget set.  This perspective naturally suggests a reduction from the unlearning problem to that of {\em data attribution}, where the goal is to predict the effect of changing the training set on a model's outputs. Thus motivated, we propose the following meta-algorithm, which we call Datamodel Matching (\dmm): given a trained model, we (a) use data attribution to {\em predict} the output of the model if it were re-trained on all but the forget set points; then (b) {\em fine-tune} the pre-trained model to match these predicted outputs.
In a simple convex setting, we show how this approach provably outperforms a variety of iterative unlearning algorithms. Empirically, we use a combination of existing evaluations and a new metric based on the KL-divergence to show that even in non-convex settings, \dmm achieves strong unlearning performance relative to existing algorithms. An added benefit of \dmm is that it is a meta-algorithm, in the sense that future advances in data attribution translate directly into better unlearning algorithms, pointing to a clear direction for future progress in unlearning.
\end{abstract}

\section{Introduction}
\label{sec:intro}
The goal of machine \emph{unlearning} is to remove 
(or ``unlearn'') the impact of a specific collection 
of training examples from a trained machine learning model. 
Initially spurred by regulations such as the EU's {\em Right to be Forgotten} \citep{ginart2019making},  
machine unlearning has found a variety of recent applications including:
removing the effect of toxic, outdated, or poisoned data
\citep{pawelczyk2024machine,
goel2024correctivemachineunlearning}; rectifying copyright
infringement in generative models \citep{liu2024unlearning, dou2024avoidingcopyrightinfringementmachine, triantafillou_llm_copyright_unlearning}; 
and aligning LLMs \citep{li2024wmdpbenchmarkmeasuringreducing, yao2024largelanguagemodelunlearning}. 

This plethora of potential applications has prompted a growing 
line of research into better {\em unlearning algorithms}.
An unlearning algorithm takes as input a model $\theta$
(trained on a dataset $S$) and a ``forget set'' $S_F \subset S$,
and outputs a model $\theta_{UL}$ that ``looks like'' it was trained on 
the so-called ``retain set'' $S_R := S \setminus S_F$.
Of course, one valid unlearning algorithm simply
ignores the trained model $\theta$
and trains a new model $\theta_{UL}$ from scratch on the retain set $S_R$.
This algorithm clearly succeeds at the task of unlearning, 
since the generated $\theta_{UL}$ really {\em is} trained only on the retain set.
But as model and dataset sizes continue to increase, or unlearning requests become more frequent, 
this approach becomes infeasible. 
The goal of unlearning is thus to {\em approximate} this naive retraining algorithm 
while imposing a much lower computational burden.

For simple (e.g., convex) models, 
there are fast unlearning algorithms that also enjoy provable guarantees
\citep{neel2021deletion, graves2021amnesiac,approximate_data_deletion,
mahadevan2021certifiablemachineunlearninglinear, suriyakumar2022algorithms}.
For large neural networks, however---where efficient unlearning is 
arguably most relevant, given the cost of training from scratch---the situation
is considerably murkier. The only methods that obtain provable guarantees
tend to significantly degrade accuracy
and/or require significant changes to the training pipeline
\citep{sisa_unlearning, li2022largelanguagemodelsstrong}. 

As a result, unlearning algorithms for neural networks
typically rely on heuristic approaches that finetune
an initial model $\theta$ into an ``empirically unlearned'' 
model $\smash{\widehat{\theta_{UL}}}$.
These approaches, however, have not yet led to consistently reliable 
unlearning algorithms, as evidenced by a variety of empirical evaluations 
and benchmarks \citep{hayes2024inexact,kurmanji2023towards,pawelczyk2023incontext}.

A pervasive challenge (identified by \citet{hayes2024inexact}) for
fine-tuning-based approaches is what we refer to 
as the {\em missing targets} problem.
The problem can be summarized as follows: in order to unlearn a
forget set point $x \in S_F$, fine-tuning-based methods typically employ 
some version of {\em gradient ascent} on $x$, starting from $\theta$, and gradient descent on the retain set $S_R$ in order to maintain performance.
If left unrestricted, gradient ascent will continue to make the loss 
on $x$ arbitrarily high---what we want, however, is to increase 
the loss only until it reaches its {\em counterfactual value}, i.e., 
the loss on $x$ of a model trained on the retain set $S_R$. 
Ideally, we could terminate the algorithm when the model's loss on $x$ 
reaches this ``target'' value, but the problem is that
(a) we do not have access to the target;
and (b) the optimal ``stopping time'' might be different for different points $x \in S_F$.
The result is the well-documented phenomenon of unlearning algorithms
``undershooting'' and ``overshooting'' the loss on different examples \citep{hayes2024inexact}.

\paragraph{This work.} 
In this paper, we present a new unlearning algorithm that sidesteps the
issue discussed above, and (empirically) achieves state-of-the-art unlearning 
performance.
Our algorithm resembles prior techniques in that we rely on fine-tuning 
the trained model $\theta$. 
We deviate from prior work, however, through two main ideas:
\begin{enumerate}
    \item \textbf{Oracle Matching (OM).} 
    Consider the following thought experiment: what if we could access 
    the {\em outputs} (but not the parameters) of a 
    model trained on the retain set $S_R$?
    We show that such ``oracle'' access directly enables an efficient,
    fine-tuning-based unlearning algorithm.
    Rather than minimizing/maximizing loss on the retain/forget sets,
    this algorithm samples a subset of the entire dataset that includes the forget set, 
    and directly minimizes the difference between model outputs and oracle outputs on this subset. 
    Conceptually, this algorithm is ``stable'' in that upon convergence,
    the model agrees with the oracle on all the fine-tuning points, 
    including those in the forget set---thus sidestepping the aforementioned
    ``missing targets'' problem.
    Empirically, we find that the fine-tuned model also {\em generalizes} beyond 
    the fine-tuning points, and in some way ``distills'' the target model
    into parameters $\theta'$.
    \item \textbf{Oracle simulation.} $\om$ 
    on its own is not an unlearning algorithm---it relies on 
    the very ``oracle model'' that it aims to replicate. 
    Observe, however, that implementing $\om$ does not require access to the weights of an oracle model, 
    but only to its outputs on a fixed number of inputs. 
    Thus, $\om$ can be implemented efficiently given access to an efficient routine for computing such outputs.
    Such a routine is precisely the target of {\em predictive data attribution}
    methods \citep{ilyas2024mltutorial}, where the goal is exactly to 
    predict how a model's outputs would change if its training dataset were modified.
    This leads to our second idea: instead of fine-tuning on ``oracle'' outputs, 
    we fine-tune on {\em simulated} outputs from a predictive data attribution method. 
    We show that despite these methods being imperfect, applying our $\om$
    algorithm to {\em simulated} oracle outputs works nearly as well as using
    the true oracle outputs.
\end{enumerate}
The resulting algorithm, {\em Datamodel Matching} (\dmm), not only achieves current
state-of-the-art performance (\Cref{fig:headline}), but also introduces a reduction from unlearning
to data attribution, 
allowing us to translate future improvements in the latter field to 
better algorithms for the former.

\begin{figure}[!htbp]
    \centering
        \includegraphics[width=1.0\linewidth]{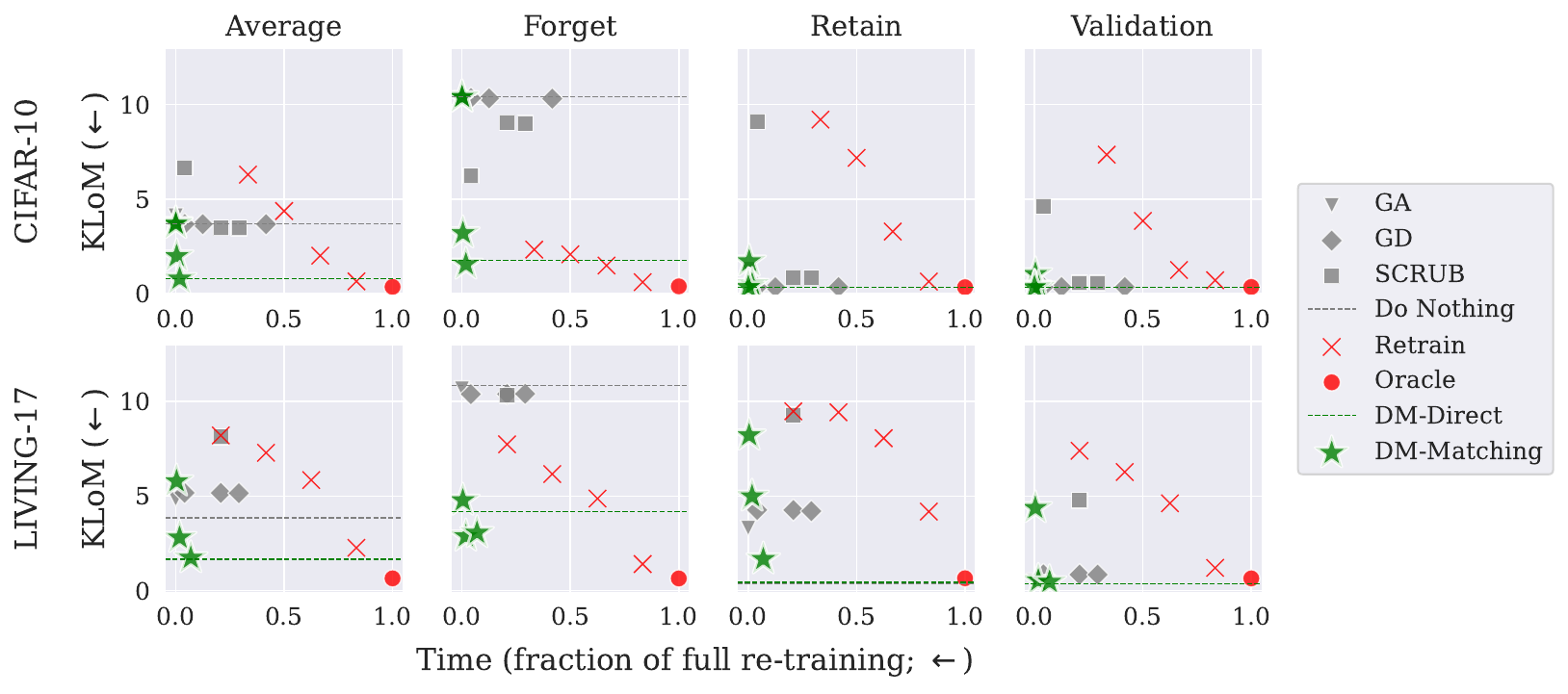}
                \caption{{\bf Effective unlearning via predictive data attribution.} We apply different approximate unlearning  methods to trained DNNs to unlearn selected forget sets from CIFAR-10 and ImageNet-Living-17. \klom scores (y-axis) measure the quality of unlearning by computing the distributional distance between unlearned predictions and oracle predictions (e.g., 0 means perfect unlearning). To contextualize each method's efficiency, we also show the amount of compute relative to full re-training (x-axis). We evaluate \klom~values over points in the forget, retain, and validation sets to ensure that unlearning is effective across all datapoints, and report the 95th percentile in each group; we also report their average (1st column). %
               Our new methods leveraging data attribution (\dmdirect and \dmm) dominate the pareto frontier of existing unlearning methods, and approach the unlearning quality of oracle models (full re-training) at a much smaller fraction of the cost. 
}
    \label{fig:headline}
\end{figure}

The rest of our paper proceeds as follows.
In Section \ref{sec:background}, we formally introduce the unlearning problem,
as well as the field of (predictive) data attribution.
In Section \ref{sec:define_unlearning}, we strengthen existing 
unlearning evaluation by introducing a new metric called {\em KL Divergence of Margins} (\klom).
\klom directly adapts a formal definition of unlearning \citep{neel2021deletion}
to be computationally and statistically tractable to estimate,
and addresses some challenges faced by existing unlearning metrics.
Then, in \Cref{sec:oracle_matching}, we combine the two insights above (Oracle Matching
and Oracle Simulation) to derive {\em Datamodel Matching} (\dmm).
Our algorithm achieves state-of-the-art performance across a suite of empirical evaluations.
Finally, in Section \ref{sec:linear_models}, we provide some theoretical justification 
for our algorithm using a case study of underdetermined ridge (linear) regression.
In particular, we show that in this simple setting, the Oracle Matching (\om) primitive 
can provably lead to faster convergence.
We conclude with a discussion of limitations and directions for future work.

\section{Preliminaries}
\label{sec:background}
\label{sec:prelim} 
In this section, we introduce some preliminary notation, 
definitions, and results. Throughout the section, 
we will let $S \in \mathcal{X}^n$ be a fixed dataset 
drawn from an example space $\mathcal{X}$, and we define a 
{\em learning algorithm} $\algo: \mathcal{X}^* \to \Theta$
as a (potentially random) function mapping
from datasets to machine learning models $\theta$.
Finally, for an example $x \in \mathcal{X}$,
we use $f_x: \Theta \to \mathbb{R}^k$ to denote a {\em model evaluation} on the example $x$
(for example, this may be the $k$-dimensional per-class probabilities).

\subsection{Machine unlearning}
Consider a machine learning model $\theta \sim \algo(S)$ trained on a dataset $S$.
Given a ``forget set'' $S_F \subset S$, and 
a corresponding ``retain set'' 
$S_R = S \setminus S_F$, the goal of an  
{\em exact} unlearning algorithm is to compute a sample from 
$\algo(S_R)$ starting from the trained model $\theta$:
\begin{definition}[Exact unlearning \citep{ginart2019making}]
    An \underline{\smash{unlearning algorithm}} $\mathcal{U}: \Theta \times 2^{|S|} \to \Theta$ is said
to be an \underline{\smash{exact}} unlearning algorithm if, for all $S_F\subset S$,
$\smash{\mathcal{U}(\algo(S), S_F) \overset{d}{=} \algo(S_R),}$
where $\smash{\overset{d}{=}}$ represents equality in distribution over models. 
\end{definition}
Though compelling in theory, exact unlearning tends to be  
too stringent a criterion when applied to deep learning,
often leading to computational infeasibility or degradations in 
accuracy \citep{liu2024unlearning}.
This motivates a look at {\em approximate unlearning}, 
which asks only for the distribution over unlearned models 
to be $(\epsilon, \delta)$-indistinguishable from re-training: 
\begin{definition}[$(\varepsilon, \delta)$-unlearning \citep{neel2021deletion}]
    \label{def:unlearn}
    $\mathcal{U}$ is an $(\epsilon, \delta)$-approximate unlearning algorithm if, for all $\mathcal{O} \subset \Theta, S_F \subset S$ we have that
    \begin{align}
        \label{eq:approx_unlearning}
    \Pr{[\mathcal{U}(\algo(S), S_F) \in \mathcal{O}]} &\leq e^{\epsilon}\Pr{[\algo(S_R) \in \mathcal{O}]} + \delta, \\
    \Pr{[\algo(S_R) \in \mathcal{O}]} &\leq e^{\epsilon}\Pr{[\mathcal{U}(\algo(S), S_F) \in \mathcal{O}]} + \delta
    \nonumber
    \end{align}
\end{definition}
This definition (intentionally) resembles differential privacy, 
and asks for the distribution of unlearned models to be statistically close 
to the distribution of re-trained ``oracle'' models. In particular,
this condition guarantees than an adversary who observes the model 
returned by the unlearning algorithm $\mathcal{U}$
cannot draw any inferences with accuracy that is much 
higher than if the model was fully re-trained. 

While unlearning
algorithms achieving Definition~\ref{def:unlearn} exist for convex models
\citep{neel2021deletion, approximate_data_deletion, guo2019certified}, and for non-convex models when
the training process is altered or under stylized optimization conditions
\citep{sisa_unlearning, chien2024stochastic, Gupta2021AdaptiveMU}, the bulk of ongoing work
in unlearning evaluates Definition~\ref{def:unlearn} \emph{empirically}, rather
than as a provable property. 
We return to the problem of evaluating unlearning algorithms 
more carefully in \Cref{sec:define_unlearning}.

\subsection{Predictive data attribution (Datamodeling)}
Our work also draws on a separate line of work in 
machine learning called {\em data attribution} 
\citep{koh2017understanding,hammoudeh2024training,ilyas2024mltutorial}.
Broadly, data attribution is an area concerned with connecting training data samples to the
predictions of the corresponding ML models. 
Of particular relevance to our work is 
a particular type of data attribution 
called {\em predictive data attribution}
(also known as datamodeling \citep{datamodels, TRAK}).

In predictive data attribution, the goal is to produce an
estimator (or {\em datamodel}) that takes as input a training set,
and as output accurately predicts the behavior of a 
machine learning model trained on that training set.
Using our existing notation: for an example $x \in \mathcal{X}$, a datamodel for $x$ is a function
$\smash{\hat{f}: 2^S \to \mathbb{R}}$ such that, for any $S' \subset S$,
\begin{equation}
    \label{eq:datamodel_acc}
    \hat{f}(S') \approx f_x(\algo(S')).
\end{equation}
In other words, $\hat{f}(S')$ directly predicts the result of applying 
the training algorithm $\algo$ to the dataset $S'$, and 
evaluating the function $f_x$ on the resulting model.\footnote{For notational simplicity we take $f_x$ to be a scalar function, but more generally we can apply this approach to each coordinate of vector-valued $f_x$.} 
Despite the complexity of modern training algorithms $\algo$ (e.g., training deep neural networks with stochastic gradient descent),
\citet{datamodels} show that 
{\em linear} datamodels often suffice 
to accurately predict model behavior.
In other words, for an example $x$, one can compute a 
vector  $\beta \in \mathbb{R}^{|S|}$ such that,
for subsets $S' \subset S$,
\[
    \hat{f}(S') := \sum_{z_i \in S'} \beta_i \approx f_x(\algo(S')).
\]
To compute these coefficients, \citet{datamodels} sample
a variety of subsets $S_1,\ldots, S_k$ at random from $S$,
and then solve the (regularized) regression problem 
\begin{align}
    \label{eq:testset_datamodels}
    \beta = \min_{w \in \mathbb{R}^n}
    \frac{1}{m} \sum_{i=1}^m (w^\top \bm{1}_{S_i} - f_x(\mathcal{A}(S_i)))^2 + \lambda \|w\|_1.
\end{align}
They show that despite the datamodel being constructed using 
random subsets $S_i \subset S$, the function $\hat{f}$ remains 
remarkably accurate on non-random datasets (see \cite{datamodels}
for a full evaluation).
Linear datamodels are particularly appealing for two reasons. 
First, the coefficients $\beta_i$ have an intuitive interpretation
as the influence of the $i$-th training example on a model's 
prediction on $x$ \citep{koh2017understanding, feldmanmemorization}.
Second, they establish a connection to a class of statistical 
techniques relating to influence functions, 
which has unlocked a suite of tools for estimating the 
coefficients more effectively \citep{TRAK, grosse2023studying}.

\label{sec:problem_setup}
\section{Empirically evaluating unlearning}
\label{sec:define_unlearning}
In \Cref{sec:background}, we introduced the unlearning problem,
culminating in a formal definition of the problem (\Cref{def:unlearn}).
As stated,
evaluating whether \Cref{def:unlearn} holds for a given unlearning algorithm
is a difficult problem for several reasons.
First, given the overparameterized nature of large-scale models,
fully satisfying Definition \ref{def:unlearn} is likely impossible, and verifying it involves comparing distributions in a
space with millions or billions of dimensions.
Secondly,
the definition generally needs to hold over arbitrary forget sets $S_F$,
or at least across a range of forget sets $S_F$
likely to occur in practice.

\paragraph{The current evaluation paradigm.}
To deal with these challenges,
one typically evaluates unlearning by
using model {\em outputs} $f_x$\footnote{A common choice of model output used in prior work, which we will also use, is the {\em margin} of the classifier.} rather than parameters,
and testing for the {\em implications} of \Cref{def:unlearn}
rather than for the definition directly.
In particular, the strongest existing unlearning evaluation for supervised learning,
called \ulira \citep{hayes2024inexact},
takes inspiration from {\em membership inference attacks} (MIAs)
\citep{carlini2021membership} and measures the ability of an adversary
to distinguish between the distribution of outputs of an 
unlearned model on (a) validation examples and (b) unlearned examples.

Omitting a few details, the main idea is as follows: consider an unlearning
algorithm $\mathcal{U}$ and a training algorithm $\algo$.
Starting from a training set $S$, first randomly hold out a small validation set $S_V$.
From what remains, fix a forget set $S_F \subset S$ and sample datasets $S_i \subset S$.
Train a model $\theta^* \sim \algo(S_i)$ on each dataset,
and then unlearn $S_F$ to produce
$\theta^*_{UL} \sim \mathcal{U}(\theta^*, S_F)$.
By \Cref{def:unlearn}, these models should be indistinguishable
from models that were not trained on $S_F$, and thus should
behave (nearly) identically on points $x \in S_F$ and on points $x \in S_V$.

Operationalizing this intuition, \ulira with probability
$\frac{1}{2}$ draws either $x \in S_F$ or $x \in S_V$, and
measures the output $y = f_x(\theta^*_{UL})$.
An (optimal) adversary observes $y$, and tries to guess
whether the corresponding $x$ was an unlearned point or a validation point.
If $\mathcal{U}$ is an $(\epsilon, \delta)$-unlearning
algorithm, the two distributions $y|x\in S_F$ and $y|x\in S_V$ would be
$(\epsilon, \delta)$-indistinguishable by post-processing, and so the adversary
could have accuracy greater than $\frac{1}{2}e^{\epsilon} + \delta$.
By the Neyman-Pearson lemma the optimal adversary performs a likelihood ratio test, and
$\ulira$ is based on approximating this likelihood by estimating the
distribution of unlearned and test margins.

For a more detailed description of \ulira and overview of similar MIA-based approaches, we refer the reader to
\citep{hayes2024inexact}, and we include the pseudocode for the computationally efficient
implementation of this evaluation \EfficientULIRA in Appendix \ref{appendix:ulira}.

\paragraph{A more direct evaluation.}

Recall from Section \ref{sec:prelim} that traditionally the target of unlearning algorithms has been
$(\varepsilon, \delta)$-approximate unlearning (Definition \ref{def:unlearn}).
Note that in essence, Definition~\ref{def:unlearn} simply asks for the distribution
induced by the unlearning algorithm be ``close'' to a
distribution of models that have never been trained on $S_F$. In particular, we can view
it as a special case of the condition
\begin{equation}
    \label{eq:general_unlearn}
    \Delta_\delta (\mathcal{U}(\mathcal{A}(S), S_F), \text{safe}(S_F)) \leq \epsilon,
\end{equation}
where $\Delta_\delta$ is a statistical divergence measure (in this case the $\delta$-approximate max divergence) parameterized by $\delta > 0$,
and $\text{safe}(S_F)$ is a distribution of ``safe'' models (i.e., models that
have not been trained on $S_F$).
We can recover \Cref{def:unlearn} exactly by
letting $\text{safe}(S_F)$ be exactly $\mathcal{A}(S \setminus S_F)$, i.e., the distribution of models
trained on all but the forget set and
setting $\Delta_\delta$ appropriately.

We make two observations about \eqref{eq:general_unlearn}.
First, the choice of $\text{safe}(S_F)$ is somewhat arbitrary,
and in particular {\em any} distribution that does not depend on $S_F$, and produces a useful model would suffice. This includes, for example, distributions $\mathcal{A}'(S \setminus S_F)$
for learning algorithms $\mathcal{A}' \neq \mathcal{A}$, or distributions of {\em ensembles}
of models $\mathcal{A}(S \setminus S_F)$.
Second,
while the $\Delta_\delta$ used in \Cref{def:unlearn} has an appealing privacy interpretation,
it is sensible (especially given our focus on empirical evaluation)
to consider other divergences that are easier to estimate.
These two observations inspire a metric that we call \klom for
empirical unlearning evaluation. $\klom$ corresponds to Definition~\ref{def:unlearn} where we
(a) use $\Delta =$ KL divergence,
(b) allow for an arbitrary ``reference distribution'' $\text{safe}(S_F)$, and
(c) as in $\ulira$, study distributions of model {\em outputs} $f_x$ rather than
 parameters.

\begin{definition}[KL divergence of margins (\klom)]
    For an unlearning algorithm $\mathcal{U}$,
    reference distribution $\text{safe}(S_F)$,
    and input $x$,
    the KL divergence of margins (\klom) is given by
    \[
        \text{\klom}(\mathcal{U})\!:=\!
        D_{KL}\!\left(
            \text{safe}(S_F),\
            f_x(\mathcal{U}(\algo(S), S_F))
        \right).
    \]
\end{definition}
\noindent We note that we could choose other divergence measures such as the $\alpha$-Renyi divergence, which is also tractable to compute and has the benefit that it can be directly converted into a bound on the $\delta$-approximate max divergence in \Cref{def:unlearn} \cite{renyi}. %

Despite the arbitrariness of $\text{safe}(S_F)$, unless otherwise noted we will
mirror \Cref{def:unlearn} and take $\text{safe}(S_F) := \algo(S \setminus S_F).$
Throughout the rest of this work, we primarily evaluate unlearning algorithms via computing
$\klom$ for different inputs $x$ from the forget set, retain set, and
validation set. We also evaluate our algorithms with $\ulira$, and defer these
results to \Cref{app:eval_details}.

Compared to \ulira, \klom is simpler to implement,
has a natural correspondence with our original \Cref{def:unlearn},
and importantly, does not suffer from {\em catastrophic unlearning}:
observe that an unlearning algorithm $\mathcal{U}$ that
transforms its input into a random classifier will pass an \ulira
evaluation, as the random classifier will treat unlearned points and
validation points identically.
In contrast, by forcing us to explicitly specify $\text{safe}(S_F)$,
\klom explicitly compares unlearned models to a baseline whose performance
we know {\em a priori}.
We explain this further in Appendix
\ref{appendix:ulira_vs_klom}, and provide empirical evidence in Figure
\ref{fig:ulira_vs_acc_plot}. Crucially, both $\klom$ and $\ulira$ evaluate unlearning algorithms using \emph{point-specific} distributional estimates, which as observed in \cite{hayes2024inexact} makes these evaluations far more stringent than prior approaches.

\section{\dmm: Unlearning by matching on simulated oracles}
\label{sec:method}
Having defined an evaluation apparatus, 
we now introduce our algorithm for machine unlearning.
We first motivate the algorithm by observing a common challenge
in existing methods. We then, in \Cref{subsec:oracle_matching}, propose an effective
hypothetical algorithm for unlearning, under the unrealistic assumption
that we have access to outputs of the ``oracle'' model.
In \Cref{subsec:dd}, we show how to accurately simulate such oracle outputs using data attribution methods. %
Finally, in \Cref{subsec:dmm}, we combine these insights and present our final algorithm, Datamodel Matching (\dmm), and demonstrate its effectiveness and efficiency.

\label{sec:oracle_matching}

\subsection{Motivation: the missing targets problem}
Recall that the goal of unlearning is to approximate an {\em oracle} model,
i.e., a model that was never trained on a given ``forget set'' of data.
In strongly convex settings, this oracle model is unique,
since it corresponds to the minimizer of a strongly convex loss function
over the complement of the forget set (the {\em retain set}).
Thus, running gradient descent ($\GD$) on the retain set loss yields a
provable (and in some cases, efficient \citep{neel2021deletion}) unlearning
algorithm.

In the context of deep neural networks, however, $\GD$ alone is insufficient.
In these settings, the loss function and training data alone do not fully
specify the final model.
In particular, once we have already minimized loss on the forget set,
applying \GD on the retain set does not significantly alter
forget set predictions, preventing us from recovering the oracle model.
Many unlearning methods for deep neural networks
\citep{neurips2023unlearning, kurmanji2023towards} thus
actively {\em increase} loss on
the forget set (e.g., via gradient ascent)
while {\em maintaining} performance on the retain set.

This general approach comes with a significant set of drawbacks,
which we collectively refer to as the {\em missing targets} problem.
First, the assumption that forget set points will increase in loss
after unlearning and retain set points will not is not necessarily correct. 
For example, if there are semantically similar points across the forget and retain sets,
then loss on points in the retain set can also increase after re-training solely on the retain set; conversely, the loss may not noticeably increase for any of the points if the model can generalize sufficiently well from similar points in the retain set. 
Second, even for a forget set point whose loss {\em does} increase under the
oracle model, our goal is not to increase loss arbitrarily,
but instead only until it reaches its ``target value''---the expected loss under a perfectly retrained model.
Since we lack access to these values, it is challenging to know when a given forget set point has been ``unlearned.''
Prior work tries to deal with this problem by devising heuristic
regularization schemes, e.g., via early stopping, but nevertheless
often overshoot or undershoot the target loss for a given data point.
\Cref{fig:individual_point_unlearning__scrub} illustrates this phenomenon
for a popular unlearning algorithm called SCRUB: over
iterations of the algorithm, different points are unlearned
(and then subsequently ``overshot'') at different points in time
\citep{hayes2024inexact}.

\begin{figure}[ht]
    \centering
    \includegraphics[width=1.0\linewidth,trim={0 0 0 2.2em},clip]{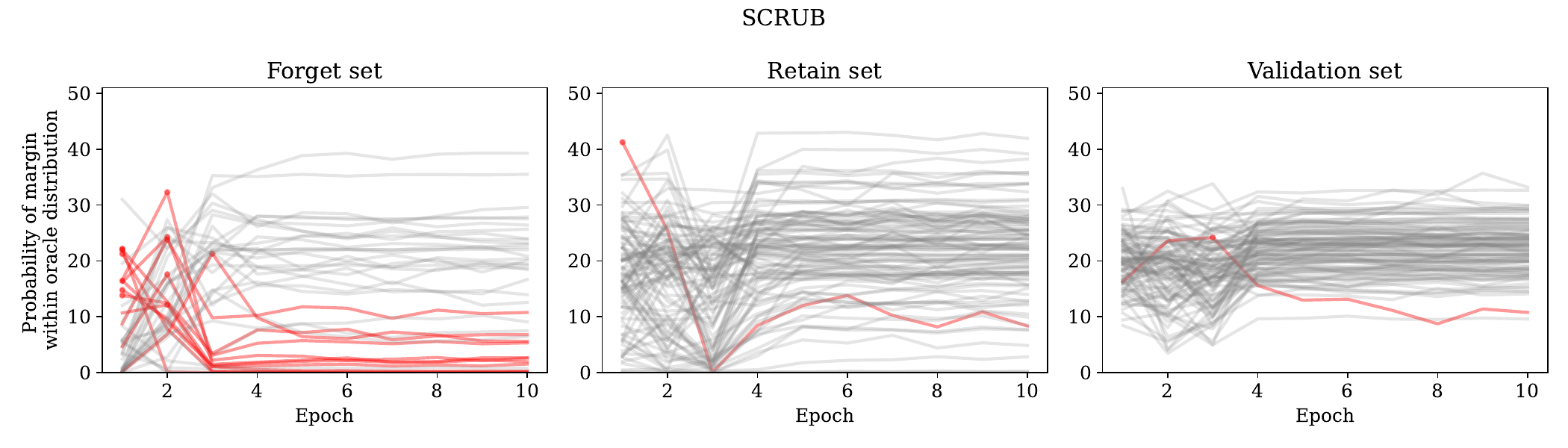}
    \caption{{\bf The missing targets problem.}
    We apply the SCRUB \citep{kurmanji2023towards} algorithm to unlearn a forget set of CIFAR-10,
    and measure how well different (random) points are unlearned over time.
    To quantify how well a given point $x$ is unlearned, we fit a Gaussian
    distribution to the outputs of oracle models at $x$, and compute
    the likelihood of the average outputs from unlearned models under
    this distribution.
    We track this likelihood (y-axis) for random points across the duration of unlearning algorithm (x-axis).
    For many examples in the forget set (shown in red), unlearning quality is hurt by training
    for too long as we lack access to oracle targets. %
    }
    \label{fig:individual_point_unlearning__scrub}
\end{figure}

\subsection{The oracle matching algorithm}
\label{subsec:oracle_matching}
In essence, the underlying challenge is that we do not know the oracle model's behavior a priori. But what if we did have access to its predictions? In particular,
\begin{center}
{\em Given access to sample outputs from the oracle model (re-trained without the forget set), can we efficiently fine-tune an existing model (trained on the full dataset) to match the outputs out of sample?}
\end{center}

\noindent While assuming access to oracle outputs is unreasonable---since our goal is to produce an oracle model in the first place---later in \Cref{subsec:dmm}, we
 will replace oracle access with an efficient proxy using data attribution. For now, we simply assume we have direct access to oracle predictions, and focus on understanding whether
 gradient-based optimization can match predictions of the oracle.
 Even in this idealized setting, it is not clear how fast (if at all) gradient descent can converge to an oracle model. For example, whether we can do this efficiently with a small sample is unclear; it is possible that fine-tuning the trained model can match the oracle predictions on the sampled points, but fail to generalize when evaluated on held-out points.

Formally, we assume access to predictions of an oracle model $f^{\text{oracle}}(x) := f_x(\mathcal{A}(S_R))$, where again $S_R$ is the retain set and $f_x$ is the evaluation of the model on input $x$ (e.g., in classification settings one can take $f$ to be the logits of the neural network).
The {\em Oracle Matching} (\om) algorithm runs gradient descent\footnote{In practice we use the Adam optimizer.} to minimize the MSE between the output logits from the model $f_x(\theta)$ and oracle predictions $f^{\text{oracle}}(x)$ on samples $x$ from the
forget and retain sets; see the pseudocode in Algorithm~\ref{algo:oracle_matching}.

\begin{algorithm} 
\caption{Oracle Matching (\om)}
\begin{algorithmic}[1] 
    \State \textbf{Input:} Trained model $\theta$; oracle predictions $f^{\text{oracle}}(x)$; fine-tuning retain set size $r$
    \State \textbf{Output:} Unlearned model $\theta_{UL}$
    \State Initialize $\theta_0 = \theta$
    \For{$t = \{0,...,T-1\}$} \Comment{$T$ epochs}
        \State $S_R' \leftarrow S \setminus  S_F$  \Comment{Sub-sample $r$ points from retain set}
        \State $S_{\text{fine-tune}} = S_F \bigcup S_R'$ \Comment{Combine with forget set}
        \For{$x \sim S_{\text{fine-tune}}$} %
            \State $L(\theta_t) = \|f_x(\theta_t) - f^{\text{oracle}}(x)\|^2$ \Comment{Compute MSE loss}
            \State $\theta_{t+1} = \theta_{t} - \eta_t \cdot \nabla_\theta L(\theta_{t})$ \Comment{Perform update with gradient}
        \EndFor
    \EndFor
    \State \textbf{Return} Model $\theta_{UL} = \theta_T$
\end{algorithmic} \label{algo:oracle_matching}
\end{algorithm}

\paragraph{Evaluating oracle matching.}
We evaluate \om on various forget sets on two image classification tasks:
ResNet-9 models trained on CIFAR-10 and ResNet-18 models trained on an ImageNet
subset Living-17~\citep{santurkar2020breeds}.
We compare \om to the following unlearning baselines: gradient ascent (GA) on forget set, gradient
descent (GD) on retain set, SCRUB \citep{kurmanji2023towards}, the trivial ``Do Nothing'' baseline that returns the original trained model, and re-training fully or for a fraction of the time.  %
We evaluate all methods using \klom (\Cref{sec:define_unlearning}) over distribution of $100$ unlearned 
 (method-specific) and $100$ re-trained models (see \Cref{app:eval_details} for more details).

\begin{figure*}[!htbp]
    \centering
        \includegraphics[width=1.0\linewidth]{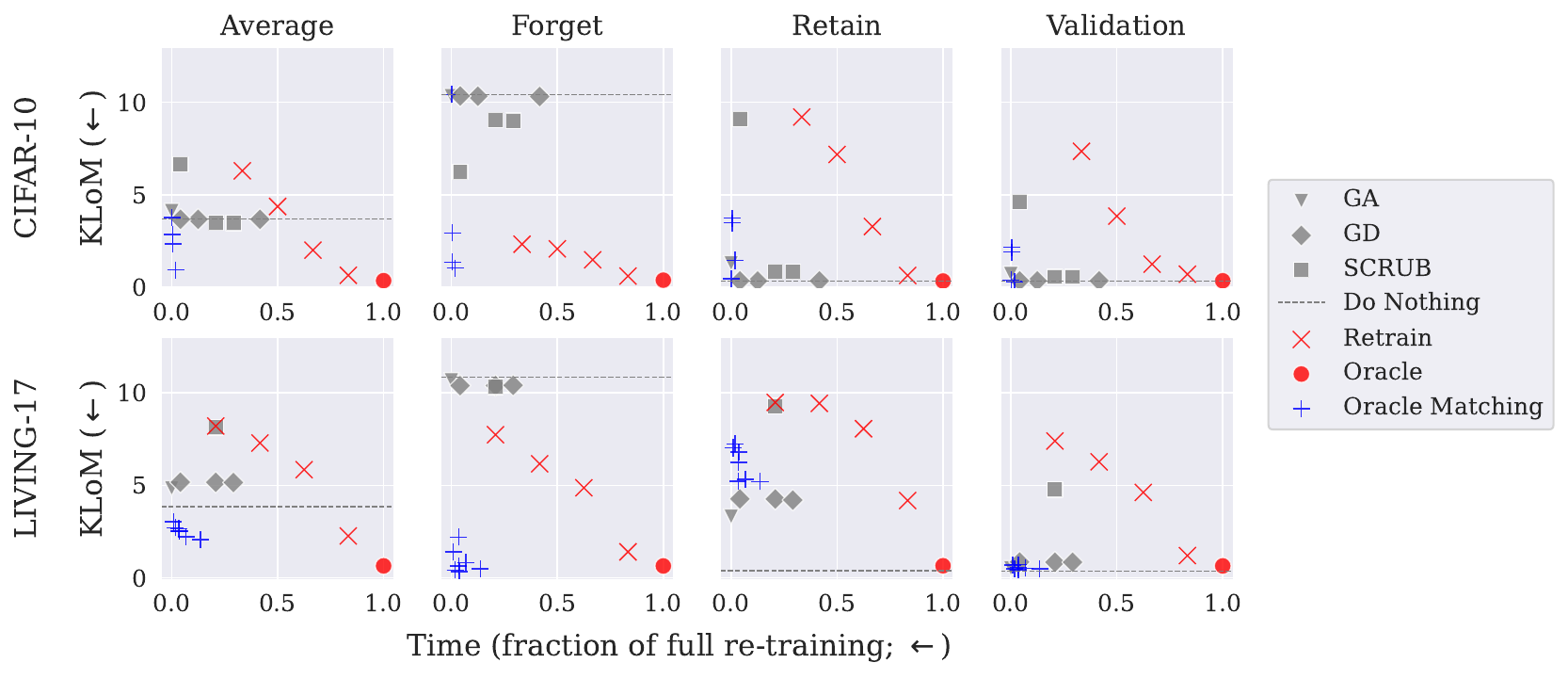}
                \caption{{\bf Oracle matching can efficiently approximate re-training.} %
                The \klom metric (y-axis) measures the distributional difference between unlearned predictions and oracle predictions (0 being perfect). We also show the amount of compute relative to full re-training (x-axis). We evaluate \klom~values over points in the forget, retain, and validation sets and report the $95$th percentile in each group; we also report the average across groups (1st column).                %
}
    \label{fig:headline_OM}
\end{figure*}

The results (\Cref{fig:headline_OM}) demonstrate that \om is able to efficiently match the predictions of the oracle.
Models unlearned with \om closely match the oracle distribution---as measured by \klom scores---across all splits of the dataset (forget, retain, and validation sets),
significantly outperforming all of the prior gradient-based approaches.
Importantly, \om achieves effective unlearning while using {\em less than 5\%} of the compute of full-retraining. In contrast, matching the performance of $\om$ on the forget set by retraining requires spending more than $60\%$ of full retraining time. The success of $\om$ implies that for any given trained model $\theta$ and the forget sets we studied: i) there exists another model $\theta'$ close in parameter space that yields similar predictions as an oracle retrained without the forget set; and ii) $\theta$ can be fine-tuned to quickly converge to $\theta'$ with a sufficient sample of oracle outputs. We find that using a sufficiently high ratio of forget points in the fine-tuning set and a sufficient fraction of retain points (but still much smaller than the full train set) is able to provide enough guidance (see \Cref{sec:understanding} for exact details). 

\subsection{An efficient proxy for oracles: datamodels}
\label{subsec:dd}
We saw that \om is effective at unlearning, but \om is not a practical algorithm as it assumes access to oracle outputs. To now turn this into a practical algorithm, we leverage methods for predictive data attribution (introduced in Section~\ref{sec:prelim}) to {\em simulate} oracle outputs. That is, for each input $x$ rather than computing $f^{\text{oracle}}(x)$ in Line $7$ of \om, given access to a datamodel $\hat{f}_x$, we can swap in an estimate of $f^{\text{oracle}}(x)$ constructed using the datamodel: 

Recall that a {\em datamodel} $\hat{f}_x$ predicts the counterfactual output of the model on input $x$ when trained on an arbitrary subset 
$S \setminus S_F$: $\smash{\hat{f}_x(S \setminus S_F) \approx f_x(\mathcal{A}(S \setminus S_F))} = f^{\text{oracle}}(x)$.
In the case of linear datamodels, we can parameterize the datamodel with a vector $\beta(x)$ so that $\hat{f}_x(S \setminus S_F) \coloneqq \beta_0 + \sum_{i \in S \setminus S_F}\beta_i(x)$.\footnote{$\beta_0$ is a constant, which we do not need to estimate as we show next.}
Leveraging linearity, we can re-write this as $(\beta_0 + \sum_{i \in S}\beta_i(x)) - \sum_{i \in S_F}\beta_i(x)$, and we also replace the first term with the starting model output $f_x(\theta)$.
Our general algorithm, \dmdirect (\Cref{appendix:dm_direct}), thus simulates the oracle outputs as $h(x) \coloneqq f_x(\theta) - \sum_{i \in S_F} \beta_i(x)$.

\begin{figure}[!hbtp]
    \centering
    \includegraphics[width=1.\linewidth]{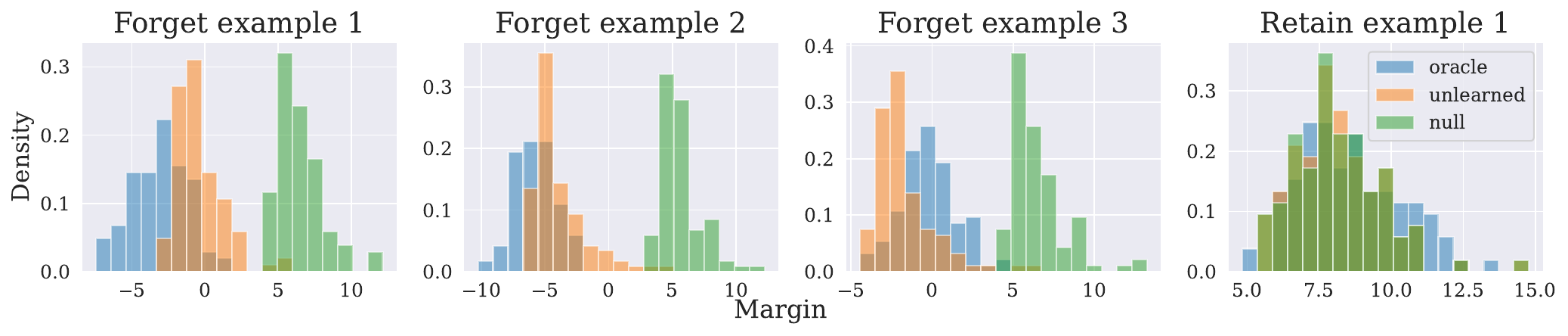}%
    \caption{{\bf Datamodels predict oracle outputs.} We examine the accuracy of datamodel predictions for unlearning a CIFAR-10 forget set (ID 5). For random samples from the forget and retain sets, we compare the distribution (across multiple runs) of margins when evaluated on that example across three settings: i) null (model on full dataset); ii) oracle (model re-trained without forget set); and iii) unlearned (using \dmdirect, applied to instances of null models). In every case, the predicted outputs (orange) closely match the ground-truth (oracle), demonstrating the effectiveness of datamodels as a proxy for oracle outputs.} %
    \label{fig:margin_histograms}
\end{figure}

\paragraph{Estimating datamodels.}
To estimate datamodels, we follow the approach in \citep{datamodels}: we train models random subsamples of the full training set and use sparse linear regression to fit datamodel vectors $\beta(x)$
(later in \Cref{subsec:dm_science} we explore alternative estimators).

\paragraph{Evaluating \dmdirect.}
In \Cref{fig:margin_histograms}, we compare model outputs on random forget and retain examples; the histograms show that the unlearned outputs from \dmdirect closely approximate the true oracle outputs.
\klom evaluations (\Cref{fig:headline}) show that \dmdirect (green line) produces outputs close in distribution to that from oracle re-training for almost all points in the data distribution. 

\paragraph{\dmdirect as a practical unlearning method.} This shows that if we only care about unlearning in prediction space rather than in model weights, we can use \dmdirect as a practical unlearning algorithm. While this may be suitable for some applications of unlearning, in practice we generally still need to update the model weights themselves since (i) privacy reasons may require deletion of the full model $\theta_0$; (ii) deploying \dmdirect for inference would require computing a new datamodel for every new input $z$, slowing down inference speed; and (iii) we may want to share the model directly, for example for fine-tuning on other downstream tasks. In the next Section we show how to use \dmdirect to unlearn in weight space.

\subsection{Oracle matching with Datamodels}
\label{subsec:dmm}
Now that we have an efficient proxy for oracle outputs via datamodels, we revisit the \om algorithm from earlier.
Our final algorithm, Datamodel Matching (\dmm), first uses datamodels to generate approximations of oracle predictions on a subset of retain points and the forget points, and then runs \om on the datamodel predictions: %

\begin{algorithm}[!hbtp]
\caption{Datamodel Matching (\dmm)}
\begin{algorithmic}[1]
    \State \textbf{Input:} Trained model $\theta_0$; datamodels $\beta(\cdot)$; fine-tuning set size $r$
    \State \textbf{Output:} Unlearned model $\theta$
    \State $S_R' \leftarrow S\setminus  S_F$  \Comment{Sub-sample $r$ points from retain set}
    \State $S_{\text{fine-tune}} = S_F \bigcup S_R'$
    \State $h \leftarrow \dmdirect(\theta_0, \beta, S_f)$ \Comment{Simulate oracles with datamodels}
    \For{$t = \{1,...,T\}$} \Comment{$T$ epochs}
        \For{$x \sim S_{\text{fine-tune}}$} \Comment{mini-batch}
            \State $L(\theta_t) = \|f_x(\theta_t) - h(x)\|^2$ \Comment{Compute loss}
            \State $\theta_{t+1} = \theta_{t} - \eta_t \cdot \nabla_\theta L(\theta_{t})$ \Comment{Perform update with gradient}
        \EndFor
    \EndFor
    \State \textbf{Return} Model $\theta = \theta_T$
\end{algorithmic} \label{algo:datamodel_matching}
\end{algorithm}

In Figure \ref{fig:headline},\footnote{Plots in the main paper highlight forget set ID 5 for CIFAR-10 and ID 1 for Living-17; see \Cref{fig:klom-full,fig:klom-full-living} for results on all forget sets.} we contextualize the performance of \dmm against baselines and \dmdirect from earlier.
\dmm achieves levels of unlearning similar to that of fully retraining the model (as measured by \klom scores), while using significantly less compute. %
Using datamodels allow us to recover the performance of $\om$ and outperforms all prior gradient-based approaches. Importantly, \dmm does not degrade accuracy on the retain and validation sets (\Cref{appendix:model-accuracy}), a common failure mode in prior methods. In contrast, previous baselines often perform worse overall than {\em doing nothing}.

\dmm is significantly more effective than partially re-training given the same computational budget for fine-tuning.
Here, we do not include the cost of computing datamodels as this is a {\em one-time} cost and hence amortized over many unlearning requests.\footnote{The practice of not including pre-computation costs is standard in the unlearning literature, e.g., \citet{approximate_data_deletion}.} This is possible because once a we estimate an accurate datamodel (either via re-sampling as done here or influence function-like approximations, which we explore in \Cref{subsec:dm_science}), the datamodel generalizes well to new forget sets in practice.

We note that \dmm is not perfect: on the tail of the retain and validation sets, \om still deviates from the oracle distribution. This deviation is biggest on the retain set, which we suspect is due to gradient updates ``reversing'' the overfitting that occurred during original training.

\section{Understanding effectiveness of datamodel matching}
\label{sec:understanding}
We saw that datamodel matching is an effective and efficient algorithm for unlearning. Here, we aim to better understand the effectiveness of Datamodel Matching (\dmm). Since \dmm consists of i) oracle matching (the fine-tuning algorithm) and ii) estimating datamodels (approximating oracle outputs), we study each component separately. First, in \Cref{subsec:om_science}, we analyze the stability of \om across time and to different choices of hyperparameters and show:
\begin{itemize}
    \item {\bf \om is stable across time}: Once $\om$ unlearns an example, the example generally stays unlearned after further iterations, unlike prior gradient-based approaches. As a result, \om is also much more stable than prior methods with respect to the choice of optimization hyperparameters.
    \item {\bf \om generalizes from a small sample}: Though \om introduces additional design parameters (sampling ratios for forget and retain sets), we find that \om is effective as long as it uses a sufficiently large sample of both. In particular, it only requires fine-tuning on a small fraction of the retain set, making \om efficient.
\end{itemize}

\noindent Next, in \Cref{subsec:dm_science}, we ablate different components of the datamodel estimators to better understand necessary ingredients for \dmm and show:
\begin{itemize}
    \item {\bf Necessity of modeling interactions between datapoints}: Datamodels (linearly) model the effect of different training examples on other inputs. We show that using only the ``diagonal'' entries (i.e., modeling only the self-influence, or the effect of including training example on itself) is much less effective and we thus need to model inter-example influences.
    \item {\bf Scaling with datamodel estimation cost}: We study how both datamodel predictiveness and unlearning performance scale with computational budget and show that the latter scales more favorably.
    \item {\bf Effectiveness of fast approximate data attribution methods}: We show that replacing regression-estimated datamodels with much faster alternatives like \trak still yield effective unlearning algorithms, albeit with worse performance.
\end{itemize}

\subsection{Unlearning stability of \om} %
\label{subsec:om_science}
To motivate our approach, we demonstrated earlier that existing fine-tuning approaches suffer from the problem of different unlearning rates due to missing targets. %
Figure \ref{fig:individual_point_unlearning__all} (cf. \Cref{fig:individual_point_unlearning__scrub}) shows that \om no longer suffers from the same problem; unlearning quality generally only improves over time (there is no risk of overshooting), even if points are still unlearned at different rates. Since points generally stay unlearned over time, $\om$ can be stopped after sufficiently many epochs and maintain good unlearning performance across nearly all examples. As a result, $\om$ is much more robust to the choice of optimization parameters such as learning rate and number of epochs compared to prior gradient ascent based methods (see \Cref{appendix:hyperparam-sensitivity} for more analysis).

\begin{figure}[H]
    \centering
    \includegraphics[width=1.\linewidth]{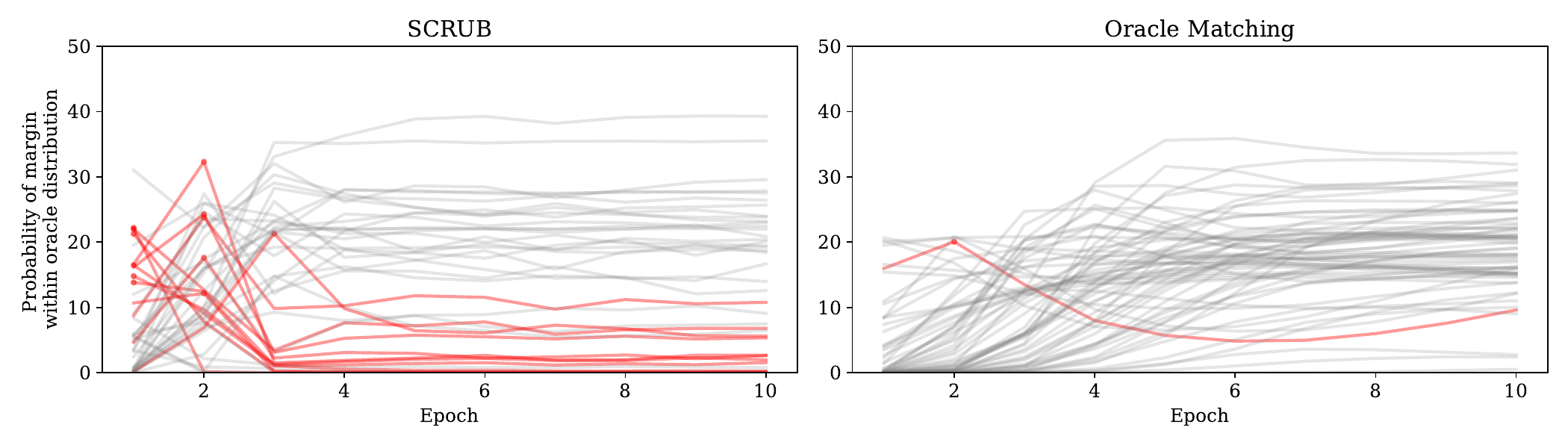}

\caption{{\bf Oracle matching circumvents the stopping time problem.}
    We revisit the earlier analysis for SCRUB (left) and apply the same analysis to Oracle Matching (right). The red lines highlight examples in the forget set whose unlearning quality is hurt by training longer. This ``overshooting'' happens frequently with SCRUB, but only rarely with Oracle Matching.}
    \label{fig:individual_point_unlearning__all}
\end{figure}
\label{fig:stopping_time_revisited}

\paragraph{Generalization of \om.}
Oracle matching fine-tunes on samples from both the forget and retain sets.
The efficiency of \om hinges on whether it can generalize quickly from a small sample.
Ablations show that %
i) \om succeeds as long as the ratio of forget set points in the fine-tuning set is sufficiently high (\Cref{fig:om_ablation_forget})
and ii) a small fraction ($\ge 0.04$) of retain set suffices to guide \om to converge towards an oracle on most retain set samples (\Cref{fig:om_ablation_retain}).
That is, \om is able to effectively {\em generalize} from a small sample of oracle outputs.
Since approximating each new oracle prediction requires estimating a separate datamodel, the fact that we only need to estimate a small number of datamodels is crucial for the efficiency of the \om algorithm.

\begin{figure}[htbp]
    \centering
    \begin{minipage}[b]{0.71\textwidth}
      \centering
      \raisebox{3mm}{
        \includegraphics[width=\textwidth]{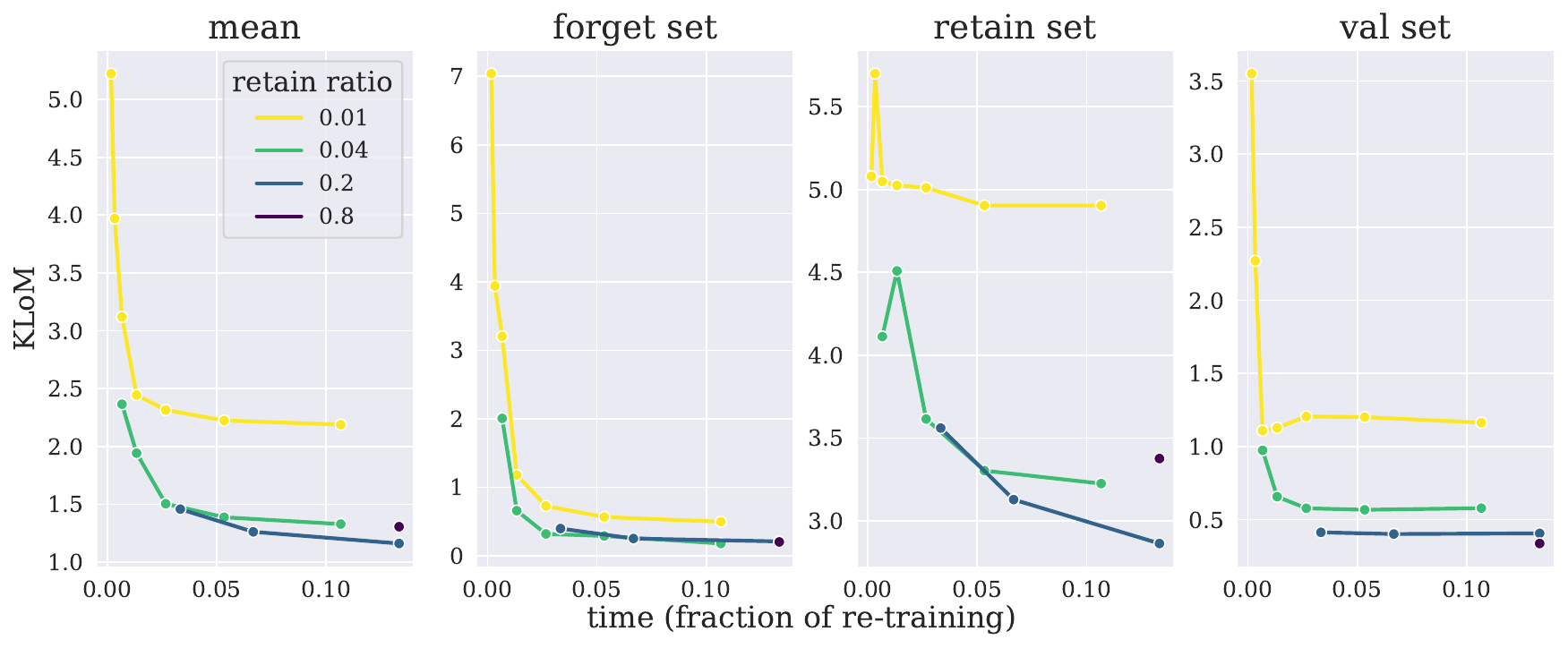}
      }
      \caption{Varying the fraction of retain set (of the full dataset) sampled for oracle matching. A sufficiently large fraction  ($\ge$ 0.04) appears to be sufficient in enabling \om to generalize to out-of-sample. In other words, \om can ``distill`` the oracle model using a small subset of the training data.}
      \label{fig:om_ablation_retain}
    \end{minipage}
    \hfill
    \begin{minipage}[b]{0.24\textwidth}
      \centering
      \includegraphics[width=\textwidth]{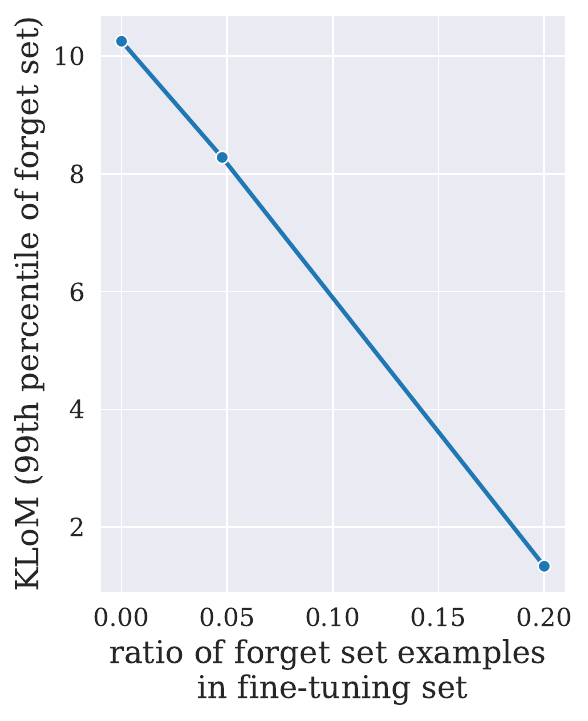}
      \caption{Higher ratio of forget set samples in the fine-tuning set for \om improves performance on the forget set.}
      \label{fig:om_ablation_forget}
    \end{minipage}
  \end{figure}

\subsection{Datamodel ablations}
\label{subsec:dm_science}

Given the stability and efficiency of \om, the success of \dmm essentially only depends on the fidelity of the datamodel-based approximations to oracle outputs. We now study aspects of datamodel estimation in more detail.

\paragraph{Necessity of modeling interactions between datapoints.} %
Datamodels model the effect of different training examples on a given model output as a {\em linear} function. Despite their simple form, we were able to accurately simulate the oracle model outputs in \Cref{subsec:dd}.
Inspecting the weights of linear datamodels show that the highest-magnitude entry for the datamodel of a training input corresponds to the example itself (i.e., excluding that training example has the largest negative effect on the model prediction on itself).
The corresponding weight is what is also known as the {\em memorization score} in prior work \citep{feldmanmemorization}. Could memorization scores---without modeling interactions between examples---suffice to linearly model oracle outputs? 
\begin{figure}[!hb]
    \centering
    \includegraphics[width=0.85\linewidth]{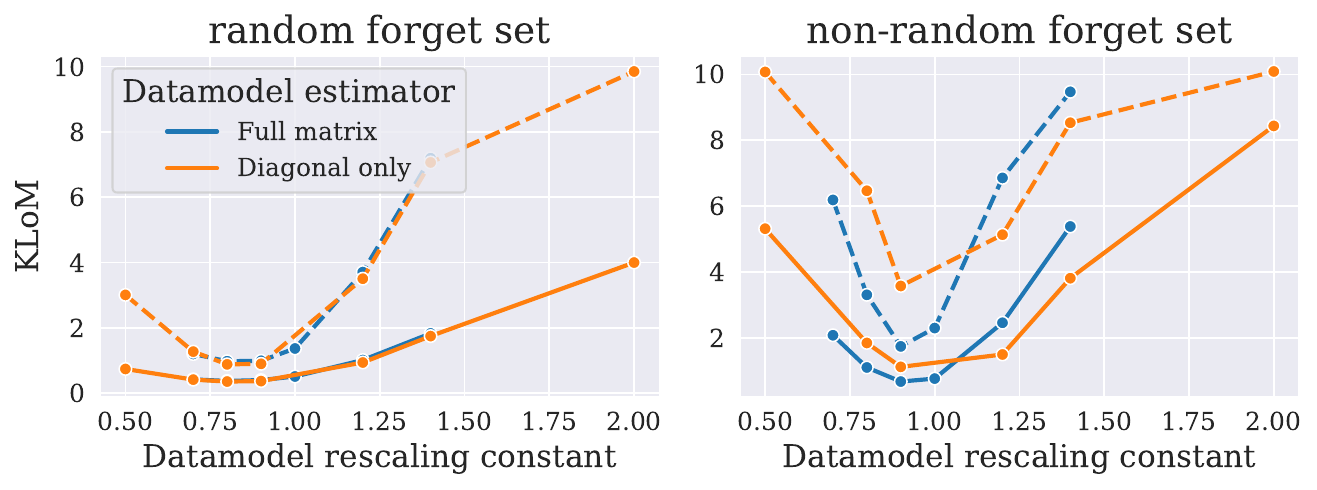}
    \caption{{\bf The effect of off-diagonal entries and re-scaling on the unlearning effectiveness.} We evaluate how well \dmdirect approximates oracle outputs for two different types of forget set on CIFAR-10 (left is random; right is non-random). Solid and dotted lines correspond to the mean and 95\%-percentile \klom scores. Diagonal-only indicates that we only use the memorization scores (the ``diagonal'' of the datamodel matrix $\{\beta_{j}(x_i)\}$).
    }
    \label{fig:mem_scores}
\end{figure}

In \Cref{fig:mem_scores}, we evaluate the quality of oracle predictions when using the full datamodel vector vs. only the memorization scores, and find the following:
\begin{itemize}
\item {\bf Insufficiency of memorization scores for unlearning non-random forget sets:} Excluding off-diagonal entries has no effect for random forget sets (left);  this makes sense intuitively since two random examples are in general unrelated. In contrast, using only the diagonal entries hurts unlearning quality for non-random forget sets (right), particularly for examples in the tail (dotted line). Intuitively, this is because the ``off-diagonal'' weights in datamodels %
capture important cross-example correlations (e.g., similar examples should have reinforcing effect on one another).  Moreover, globally scaling the memorization scores (orange lines) cannot make up for the missing off-diagonal entries. Hence, to effectively unlearn forget sets that arise in practice (i.e., non-random) via our approach, it seems necessary to model interactions between different datapoints.
\item {\bf Consistency of best scaling:} As an artifact, we also find that scaling down the datamodel weights globally by a factor ($\approx$0.9 here) improves unlearning quality marginally. Conveniently, the scale seems consistent across different types of forget sets; one could calibrate this scale using a ``held-out'' forget as part of a pre-computation stage, and subsequently apply to all forget sets.
\end{itemize}

\paragraph{Scaling with estimation cost.}
Though we excluded the cost of estimating datamodels in our analysis in \Cref{subsec:dmm} (as it is a one-time cost),
practically we need to account for them as estimating predictive datamodels is computationally expensive.
But estimating them is not all or nothing: we can tradeoff the computational cost and the datamodel predictiveness
by varying the number of re-trained models.
Here, we investigate how varying the computational resources affects both datamodel predictiveness (LDS) and unlearning performance (as measured by \klom). 
In \Cref{fig:dm_scaling}, we show the result of varying the computational cost by orders of magnitude: we can observe that while the datamodel predictiveness (as measured by the {\em linear datamodeling score} \citep{TRAK}) continues increase at the same rate, \klom---averaged across various forget sets---begins to saturate.

\begin{figure}
    \centering
    \includegraphics[width=0.8\linewidth]{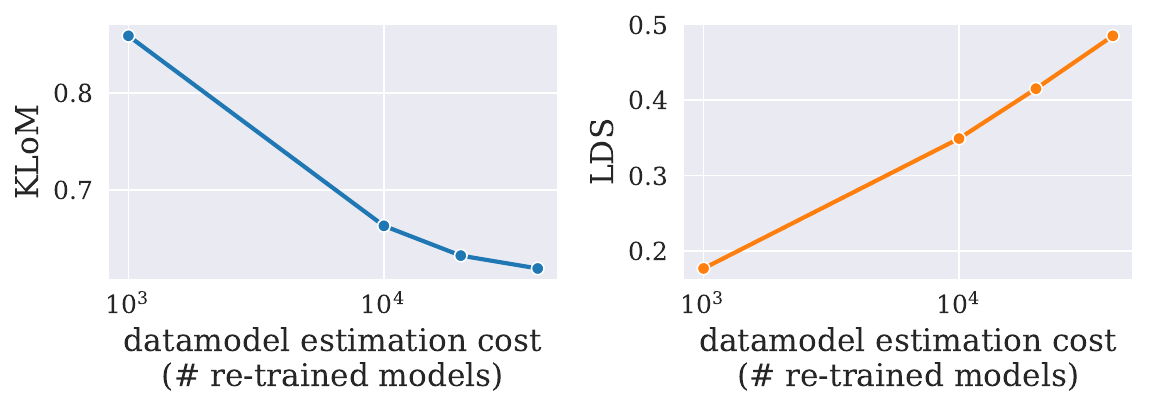}%
    \caption{{\bf The effect of estimation cost on unlearning performance and datamodel predictiveness.} On CIFAR-10, we show how unlearning performance of \dmdirect (measured by \klom; lower is better) and datamodel predictiveness (measured by the linear datamodeling score; higher is better) scales with datamodel estimation cost (number of re-trained models, in $\{ 10^3, 10^4, 2\times 10^4, 4\times10^4\}$).
    \klom is averaged over different forget sets. }
    \label{fig:dm_scaling}
\end{figure}

We hypothesize that this is due to the following difference in the distribution of counterfactual subsets being evaluated: while achieving high LDS requires predicting model outputs on the same distribution of random subsets of the training data that the datamodels were trained on, achieving high \klom scores requires predicting over small non-random forget sets. In practice, this suggests that for the purposes of unlearning, cheaper alternatives (either by using the same regression-based estimator with reduced computational resources or by using a different estimator) may perform nearly as well as computationally expensive methods.

\begin{figure}[!ht]
    \centering
    \includegraphics[width=1.\linewidth]{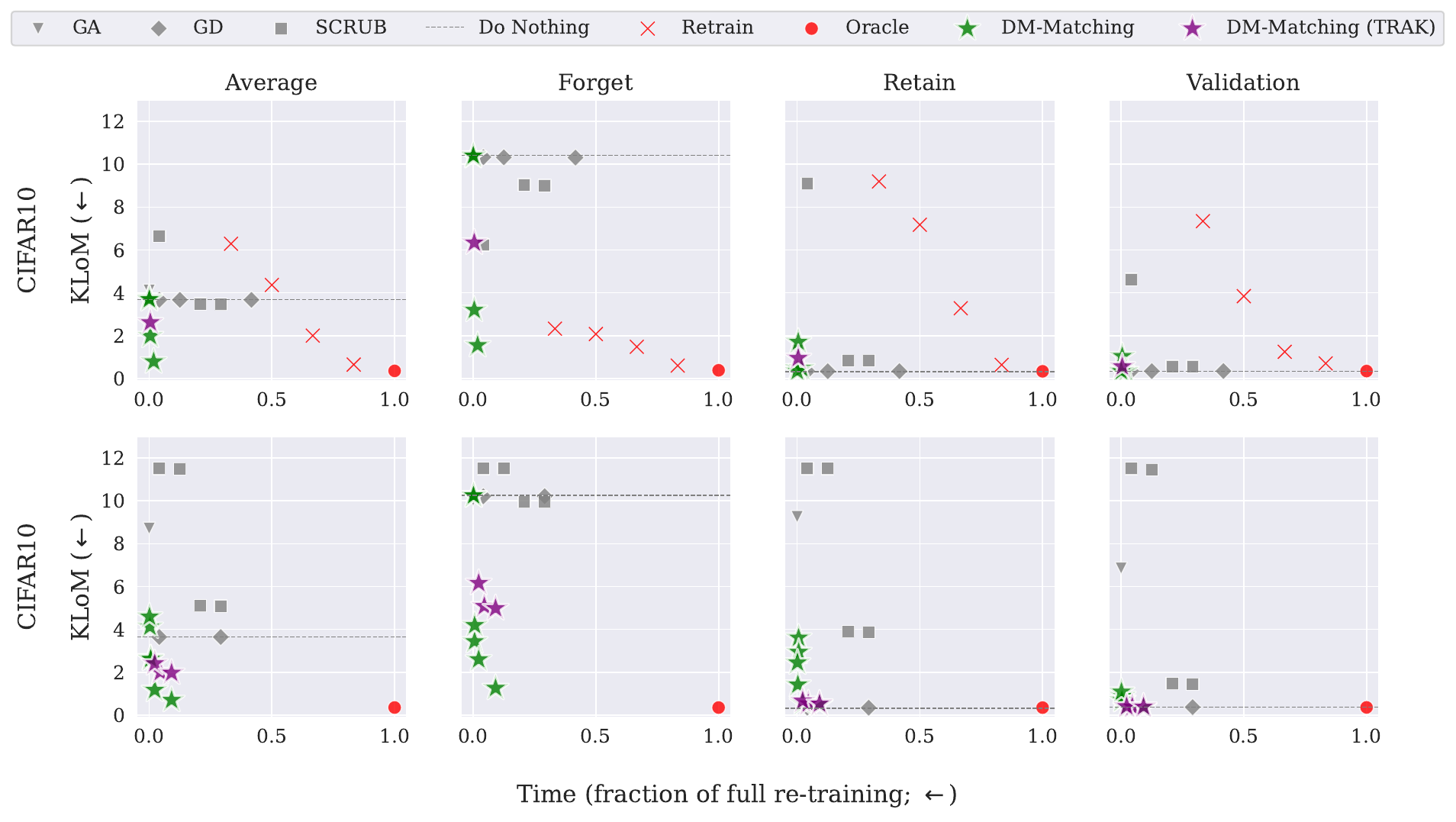}%
    \caption{{\bf Datamodel-Matching with more efficient estimators.} In the same set up as in \Cref{fig:headline}, we evaluate the effectivness of \dmdirect and \dmm when a different, more efficient estimator (TRAK) for datamodels. Though the equality of unlearning degrades, our algorithms still outperform prior methods given the same fine-tuning budget. Note that the computational savings from using \trak is not reflected on the x-axis, as computation of datamodels is considered separately from the fine-tuning cost of \dmm.}
    \label{fig:headline_TRAK}
\end{figure}
\paragraph{Efficient unlearning with TRAK.}
Even with the above considerations, estimating datamodels using the regression approach is still prohibitively expensive except in smaller scale settings. %
Since the work of \citet{datamodels}, follow-up works \citep{TRAK,grosse2023studying} have shown that efficient alternative methods can be comparably effective with substantially lower computational costs. %

Next, we investigate whether \dmm is still effective when datamodels are estimated with \trak, which is based on a particular approximation to the influence function. In \Cref{fig:headline_TRAK}, we run $\om$ with predictions generated by \trak estimators with x1000 less compute than the regression-based datamodels. As expected $\dmm$ with TRAK performs worse in terms of $\klom$ than when datamodels are used---as TRAK is worse at the underlying task of predicting counterfactuals, but we find that this drastically cheaper alternative to \dmm still outperforms all prior methods significantly. These results highlight that improving the accuracy and efficiency of alternative data attribution methods %
can help us obtain even more efficient and effective unlearning algorithms.

\nopagebreak
\section{Oracle Matching for Linear Models}
\label{sec:linear_models}
We have seen that empirically $\om$ outperforms standard gradient-based unlearning methods, and we have highlighted the missing targets problem as one possible explanation. %
Are there other factors that contribute to the success of $\om$ relative to prior methods gradient-based methods, and can we better understand what settings we expect $\om$ to perform well? This motivates studying a setting where the missing targets problem is neutralized: when the objective is strongly convex. In this setting, the unlearned model is the unique empirical risk minimizer on $X_R$, and $\gd$ initialized at the current model is a provably effective unlearning algorithm \citep{neel2021descent}. 
Even in the setting when GD on its own can converge, does providing ``guidance'' from an oracle help?

We answer this affirmatively: First, in ~\Cref{subsec:simple_exp}, we empirically identify two factors that influence whether $\om$ outperforms \gd: the strength of regularization and stochasticity in optimization. In ~\Cref{subsec:theory}, we theoretically characterize the exact convergence rates of full batch $\om$ and $\gd$ to the unlearned model in terms of the degree of regularization and the relative eigenmass on the forget and retain sets. Unlike in the full-batch setting where both algorithms converge at a linear rate, in the stochastic setting we show that $\om$ converges exponentially faster than $\sgd$, which sheds light on the superior performance of stochastic $\om$ in our empirical results. 

\paragraph{Setting.} We consider the following ridge regression algorithm, given by
\begin{align}
\label{eq:ridge_objective}
    \mathcal{A}(S) :=
    \arg\min_{\theta}  \sum_{(\bm{x_i}, y_i) \in S} \left(
        \theta^\top \bm{x_i} - y_i
    \right)^2 + \lambda \| \theta \|_2^2,
\end{align}
where $\bm{x_i} \in \mathbb{R}^d$ are the training inputs,
$y_i \in \mathbb{R}$ are the corresponding labels,
and the setting is overparameterized, so $d > |S|$.
Given a model $\theta_{\text{full}} = \mathcal{A}(S)$
trained on a full dataset $S$,
our goal is to unlearn the forget set $S_F \subset S$
by obtaining a model that minimizes the objective on the
retain set $S_R = S \setminus S_F$.
For convenience, we use $X$, $X_R$, and $X_F$ to denote the
covariate matrices for the full dataset $S$,
the retain set $S_R$,
and the forget set $S_F$ respectively. We choose the under-determined ridge regression for three reasons:
(a) The objective~(\ref{eq:ridge_objective}) is strongly convex, and so $\gd$ on $X_R$ is guaranteed to compute the (unique) unlearned model if ran for sufficiently many iterations;
(b) the least-squares objective is amenable to theoretical analysis; and 
(c) the over-parameterized setting is most
relevant to modern deep learning models where $ d \gg n$.

\newcommand{\ultarg}{\ensuremath{\theta_*}}
\paragraph{Unlearning algorithms.} Let $\ultarg = \mathcal{A}(S_R)$
be the minimizer of the ridge regression objective on the retain set
(i.e., the unlearning target).
Starting from $\theta_{\text{full}} = \mathcal{A}(S)$, we evaluate several
iterative first-order unlearning algorithms in terms of their ability to recover $\ultarg$:
\begin{itemize}
    \item[(i)] {\bf Gradient Descent (\gd)} minimizes the ridge regression objective on the retain set $S_R$ using gradient descent
    with constant step size, starting from $\theta_{\text{full}}$;
    \item[(ii)] {\bf Gradient Descent + Ascent (\gda)} incorporates forget set points in the gradient descent updates, combining gradient descent on the retain set with gradient
    ascent on the forget set;
    \item[(iii)] {\bf Oracle Matching (\om)} assumes query access to an unlearned model $\theta^*$, and uses gradient descent
    (with constant step size)
    to minimize squared error with respect to ``oracle'' predictions
    $\bm{x_i}^\top \theta^*$ on the full dataset, aiming to minimize $||X\theta  - X\theta_{*}||^2$;
    \item[(iv)] {\bf Oracle Matching on Retain Set (\omrs)}  performs gradient descent on
    the squared error from oracle predictions but only on the retain set, thereby minimizing the objective $\|X_R\theta^* - X_R \theta\|^2$.
\end{itemize}
We analyze both the full-batch and stochastic versions of these methods.
\Cref{sec:linear_exp_details} contains more details on the exact setup of
the algorithms we evaluate.

\begin{figure}[!ht]
 \centering
    \includegraphics[width=0.96\textwidth, trim = 25 25 25 25, clip]{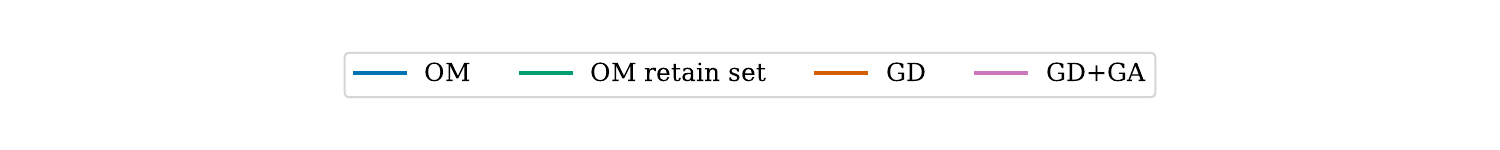}

    \begin{tabular}{cc}
        \begin{subfigure}[b]{0.35\textwidth}
        \captionsetup{ skip=0pt} %
            \includegraphics[width=\textwidth]{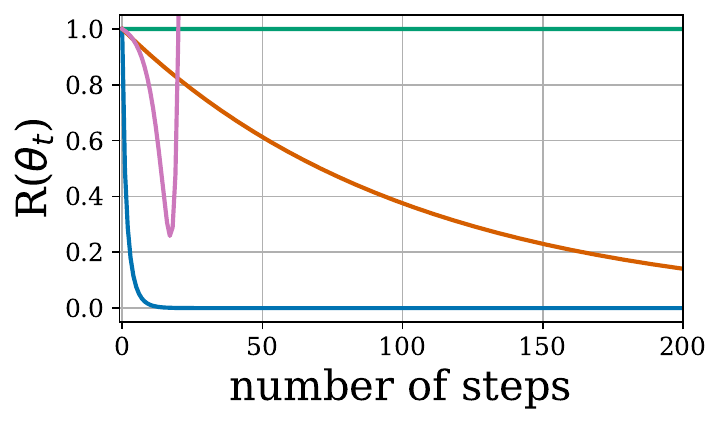}
            \caption{ small $\lambda$, full-batch}
            \label{fig:linear_main_small_lambda_fb}
        \end{subfigure} &
        \begin{subfigure}[b]{0.35\textwidth}
         \captionsetup{ skip=0pt} %
            \includegraphics[width=\textwidth]{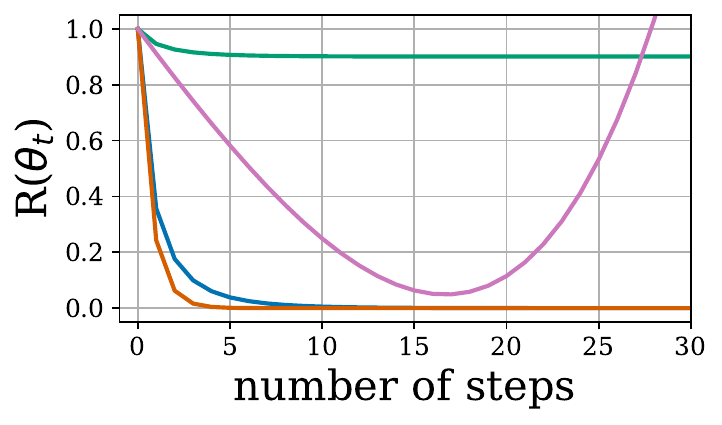}
            \caption{large $\lambda$, full-batch}
            \label{fig:linear_main_large_lambda_fb}
        \end{subfigure} 
        \\
        \begin{subfigure}[b]{0.35\textwidth}
        \captionsetup{skip=0pt} %
            \includegraphics[width=\textwidth]{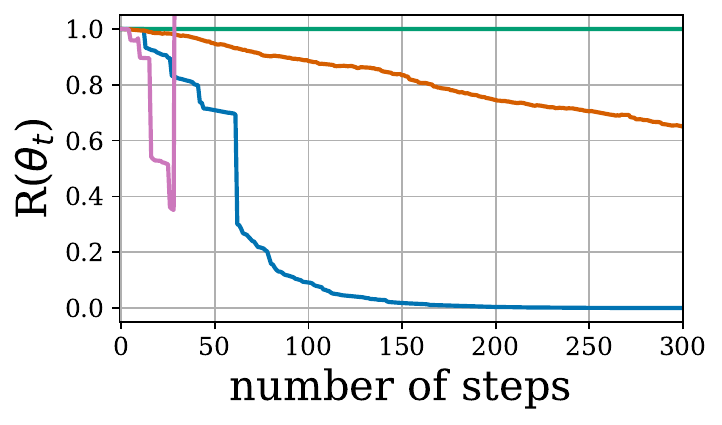}
            \caption{small $\lambda$, stochastic}
            \label{fig:linear_main_small_lambda_stochastic}
        \end{subfigure} &
        \begin{subfigure}[b]{0.35\textwidth}
         \captionsetup{skip=0pt} %
            \includegraphics[width=\textwidth]{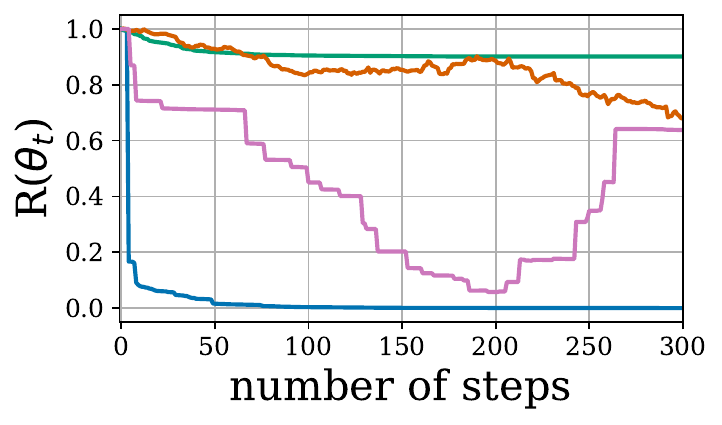}
            \caption{large $\lambda$, stochastic}
            \label{fig:linear_main_large_lambda_stochastic}
        \end{subfigure} \\
        [0.0em] %

    \end{tabular}
    \caption{Comparing unlearning methods for a linear model with $d = 400, n = 100, |S_F| = 5$. The y-axis
    shows the relative squared distance to the optimal unlearned model $R(\theta_t) = \frac{\|\ultarg - \theta_t\|}{\|\ultarg - \theta_{\text{full}}\|}$,
    where $\theta_t$ is the iterate at time $t$,
    $\theta_{\text{full}}$ is the model trained on all data, and $\theta_*$ is
    the optimal unlearned model.}
    \label{fig:linear_main}
    \end{figure}

\subsection{Experimental results}
\label{subsec:simple_exp}
For our experimental analysis, we construct a synthetic setting with $n = 100$ 
datapoints of dimension $d = 400$. 
We first fix a ``ground-truth'' weight vector $\theta \in \mathbb{R}^{400}$ 
by sampling each coordinate $\theta_i \sim \mathcal{N}(0, 1)$.
We then generate a dataset $S$ consisting of 100 training points $(\bm{x_i}, y_i)$,
where each $\bm{x_i} \sim N(0, \mathbf{I}_{400})$ is sampled from a
400-dimensional standard Gaussian, and each
$y_i \sim \mathcal{N}(\theta^\top x_i, 4)$.
We choose a forget set $S_F$ consisting of 5 random points from $S$.

We obtain the starting point for unlearning $\theta_{\text{full}}$
by minimizing the ridge regression objective \eqref{eq:ridge_objective}
with $\lambda = 4$. We then apply the iterative unlearning algorithms above,
starting from $\theta_0 := \theta_{\text{full}};$ after each iteration $t$, we
measure the relative distance from the current unlearning iterate $\theta_t$
to the unlearning target $\ultarg$, i.e.,
\[
    R(\theta_t) := \frac{\|\ultarg - \theta_t\|}{\|\ultarg - \theta_{\text{full}}\|}
\]

\paragraph{Standard setting ($\lambda = 4$, full-batch updates).}
Figure \ref{fig:linear_main_small_lambda_fb} depicts the performance of the four
unlearning algorithms we consider.
We observe that \om converges to the unlearned solution much faster \gd,
while \gda and \omrs both fail to converge even as $t \to \infty$.
\om differs from \gd primarily in two aspects: (i) it
minimizes the squared error with respect to the target model predictions
(rather than the training data labels),
and (ii) its updates involve forget set points along with retain set points
(rather than just retain set points).
\omrs helps us isolate the impact of these differences---applying oracle matching
to only the retained points makes negligible progress.
This suggests that the inclusion of forget set points in the update is the
primary driver of superior performance of \om.

Investigating further, we observe that during unlearning, the model parameters
change the most in directions orthogonal to the retain set.
Indeed, we find that despite the fact that $24\%$ of the mass of the
forget set points lies in the span of the retain set, this span actually
captures less than $0.01\%$ of the mass of the ground-truth update
$\ultarg - \theta_{\text{full}}$.
This explains the importance of including forget set points in the unlearning
objective: \omrs cannot make any
progress in directions orthogonal to the retain set, and
\gd only makes progress in those directions
due to the the $\ell_2$ regularization term.
On the other hand, \om makes rapid progress in these directions
due to the inclusion of forget set points.

As for \gda, this algorithm also uses the forget set points,
but since it does not have access to oracle predictions on these points,
it must instead introduce them into the objective in a heuristic way.
As a result, the algorithm initially makes rapid progress towards the
target solution $\ultarg$ (in this initial stage, the update induced
by the oracle labels matches that induced by gradient ascent)
but then suffers from the ``missing targets'' problem discussed in the previous
parts and diverges.
In the convex setting, we can mitigate this issue somewhat by using the
gradient norm on the retain set as a stopping criterion, but there is
no obvious way to transport this criterion to the more relevant
non-convex case.

\paragraph{Large-regularization case ($\lambda = 400$)} In Figure \ref{fig:linear_main_large_lambda_fb},
we consider the same setting as above but set $\lambda$ to a much larger value
of 400.
Here, \gd converges slightly faster than \om due to the stronger $\ell_2$
regularization, which aids \gd in converging along directions orthogonal to the
retain set. Thus, \om converges faster than \gd when $\lambda$ is moderate but can
be slower with large $\lambda$. Again, \omrs and \gda do not successfully converge,
but \gda---guided by the heuristic use of the forget set points---initially makes
significant progress towards $\ultarg$ before eventually diverging.

\paragraph{Stochastic case.} In Figures
\ref{fig:linear_main_small_lambda_stochastic} and
\ref{fig:linear_main_large_lambda_stochastic}, we replicate the experiments
above with stochastic variants of the unlearning algorithms. As in the
non-stochastic case, we see the \om on the retain set fails to make any progress,
and that \gda makes quick progress but then diverges.
However, unlike in the full-batch setting, SOM outperforms \sgd in both the large
and small $\lambda$ settings (see Appendix \ref{sec:linear_exp_details} for a discussion). 
This surprising finding is characterized in Theorem~\ref{thm:som_sep}
below, where we show that SOM converges exponentially faster than \sgd.

\subsection{Convergence theory}
\label{subsec:theory}
We now turn to studying the algorithms above theoretically,
aiming to formalize some of our empirical observations.
We start with the full-batch case, corresponding to
\cref{fig:linear_main_small_lambda_fb,fig:linear_main_large_lambda_fb} above;
we then proceed to the stochastic/minibatched case
(\cref{fig:linear_main_small_lambda_stochastic,fig:linear_main_large_lambda_stochastic}).
In all cases, we will focus on the two convergent algorithms above: oracle matching (\om)
and ridge gradient descent (\gd).

\paragraph{Full-batch case.}
In Theorem \ref{thm:convergence_gd_om}, we provide a theoretical analysis of the
convergence rates of \gd and \om (see Appendix \ref{sec:thm_proof} for the proof).
The key takeaway from this result is that the convergence rate for both algorithms
depends on both (a) the relative eigenmass of the forget and retain sets; and (b)
the strength of the ridge regularization.

\begin{restatable}[Proof in \cref{sec:thm_proof}]{theorem}{thmconvg}
\label{thm:convergence_gd_om}
    Let $S$ and $S_R$ be the full training set and the retain set respectively,
    with input matrices $X$ and $X_R$ and corresponding labels $y$ and $y_R$.
    Additionally, let $\theta_{\text{full}}$
    and $\ultarg$ denote the optima of the ridge objective \eqref{eq:ridge_objective}
    for the full data $S$ and retain set $S_R$ respectively.
    After $t$ iterations of unlearning starting from $\theta_{\text{full}}$,
    the iterate $\theta_t$ satisfies
    \[
     \theta_t - \theta_{*} = \begin{cases}
     \left(I - 2\eta\lambda \right)^t
     \left( I - \frac{2\eta}{1 - 2\eta\lambda}X_R^\top X_R \right)^{t}
     \left(\theta_\text{full} - \theta_{*}\right)
     &\text{ for ridge gradient descent (\gd).} \\
    \left( I - 2  \eta X^\top X \right)^{t} \left(\theta_\text{full} - \theta_{*}\right)
     &\text{ for oracle matching (\om).}
     \end{cases}
    \]
\end{restatable}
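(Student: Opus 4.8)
The plan is to exploit the fact that both objectives are quadratic in $\theta$, so that each constant-step gradient update is an affine map of the iterate that fixes $\ultarg$. Consequently the error $\theta_t - \ultarg$ obeys an exact linear recursion $\theta_{t+1} - \ultarg = M(\theta_t - \ultarg)$ for a fixed matrix $M$ depending on the method, and unrolling gives $\theta_t - \ultarg = M^t(\theta_{\text{full}} - \ultarg)$. The proof therefore reduces to (i) identifying $M$ for each algorithm and (ii) rewriting it in the stated factored form. Since $\theta_0 = \theta_{\text{full}}$, the induction base case is immediate and everything hinges on the one-step recursion.

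For \om, the objective is $L_{\om}(\theta) = \|X\theta - X\ultarg\|^2$, whose gradient is $\nabla L_{\om}(\theta) = 2X^\top X(\theta - \ultarg)$. Substituting into the update $\theta_{t+1} = \theta_t - \eta\,\nabla L_{\om}(\theta_t)$ and subtracting $\ultarg$ from both sides immediately yields $\theta_{t+1} - \ultarg = (I - 2\eta X^\top X)(\theta_t - \ultarg)$, so $M = I - 2\eta X^\top X$ and the claimed expression follows by induction on $t$. No optimality condition is needed here because the objective is already centered at $\ultarg$.

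For \gd, the objective is the retain-set ridge loss $L_R(\theta) = \|X_R\theta - y_R\|^2 + \lambda\|\theta\|^2$, with gradient $2(X_R^\top X_R + \lambda I)\theta - 2X_R^\top y_R$. The key step is to invoke the first-order optimality condition for $\ultarg = \mathcal{A}(S_R)$, namely $(X_R^\top X_R + \lambda I)\ultarg = X_R^\top y_R$, which lets me eliminate $X_R^\top y_R$ from the update and obtain $\theta_{t+1} - \ultarg = \bigl(I - 2\eta(X_R^\top X_R + \lambda I)\bigr)(\theta_t - \ultarg)$. It then remains to factor the scalar part out of the recursion matrix: writing $I - 2\eta(X_R^\top X_R + \lambda I) = (1 - 2\eta\lambda)I - 2\eta X_R^\top X_R$ and pulling out $(1 - 2\eta\lambda)$ gives $(1 - 2\eta\lambda)\bigl(I - \frac{2\eta}{1-2\eta\lambda}X_R^\top X_R\bigr)$. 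Because the scalar multiple of the identity commutes with $X_R^\top X_R$, the $t$-th power splits as a product, matching the stated form (this uses $1 - 2\eta\lambda \neq 0$, which holds for the small step sizes under consideration).

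The calculation involves no real obstacle beyond this bookkeeping: the essential content is that quadratic objectives induce exactly linear error dynamics, and the apparent difference between the two convergence rates is encoded entirely in which Gram matrix governs the contraction ($X^\top X$ for \om versus $X_R^\top X_R$ together with the ridge shift for \gd). If anything, the most delicate point is conceptual rather than computational---recognizing that $\ultarg$ is simultaneously the fixed point of both distinct iterations---after which the two closed forms are read off directly and can be compared eigenvalue-by-eigenvalue to support the subsequent discussion of relative eigenmass and regularization strength.
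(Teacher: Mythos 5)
Your proposal is correct and follows essentially the same route as the paper: both arguments reduce each method to the exact linear error recursion $\theta_{t+1} - \ultarg = M(\theta_t - \ultarg)$ (the paper by rewriting each quadratic objective as a form centered at $\ultarg$, you by substituting the normal equations $(X_R^\top X_R + \lambda I)\ultarg = X_R^\top y_R$ directly into the gradient, which is the same fact in different clothing) and then unroll and factor. If anything you are slightly more careful than the paper about the final factoring step, noting the commutation of the scalar identity term and the requirement $1 - 2\eta\lambda \neq 0$.
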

Theorem \ref{thm:convergence_gd_om} shows that both (full-batch) \om and \gd
exhibit linear convergence, albeit at different rates.
Indeed, in directions orthogonal to the retain set,
the middle term in the \gd convergence rate disappears, and so
the rate depends only on $\eta \lambda$. Thus, as long as the
learning rate $\eta$ is set high enough (i.e., not to cancel out
the $\lambda$) higher regularization will cause \gd to converge faster.

\paragraph{Stochastic case.} In our experimental analysis, we saw that unlike the
full-batch case, the stochastic version of oracle matching was {\em consistently}
more effective than that of gradient descent.
In Theorem~\ref{thm:som_sep} we show that, at least in the setting of
under-determined ridge regression we consider here, this observation is strongly supported by theory. 
In particular, we show that while \om converges at a linear rate 
(i.e., exponentially fast in $t$), we can show a $\Omega(\frac{1}{t^2})$ 
lower bound on the convergence of \sgd, 
giving a strong separation between the two methods.

\begin{restatable}[Proof in \cref{app:som_separation}]{theorem}{somsgdseparation}
\label{thm:som_sep}
    Consider the setting of \cref{thm:convergence_gd_om},
    where $\theta_t$ is the iterate after $t$ steps of unlearning
    initialized at $\theta_{full}$.
    Further let $\gamma_{\min} = \frac{1}{n}\lambda_{\min}^+(X^TX)$ be the minimum non-zero eigenvalue, and assume that for each $x_i, \|x_i\|_2^2 \leq \beta$.
    Then, as long as the learning rate $\eta \in (0, \frac{2}{5(\beta + \lambda)})$
    and $\text{rank}(X) > 1$, we have:
    \begin{equation}
        \frac{\mathbb{E}\left[\|\theta_t - \theta^*\|^2\right]}{\|\theta_\text{full} - \ultarg\|^2} \in
        \begin{cases}
            O\left((1-\gamma_{min} \eta)^t\right) &\text{ for oracle matching (\om).} \\
            \Omega\left(\frac{1}{t^2}\right) &\text{ for ridge gradient descent (\gd).}
        \end{cases}
    \end{equation}
\end{restatable}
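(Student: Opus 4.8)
The plan is to reduce each stochastic method to a recursion on the error vector $e_t := \theta_t - \ultarg$ and to exploit one structural asymmetry between them: at the unlearning target $\ultarg$, every \emph{per-example} oracle-matching gradient vanishes, whereas the per-example ridge gradients do not. Sampling an index $i_t$ uniformly, the stochastic \om update is the purely multiplicative recursion $e_{t+1} = (I - 2\eta\, x_{i_t} x_{i_t}^\top)\, e_t$, with no additive noise, while the stochastic \gd (i.e.\ \sgd) update is $e_{t+1} = (I - \eta G_{i_t})\, e_t - \eta\, \xi_{i_t}$, where $G_i := 2 x_i x_i^\top + 2\lambda I$ is the per-example Hessian and $\xi_i := 2 x_i (x_i^\top \ultarg - y_i) + 2\lambda \ultarg$ is the per-example gradient at $\ultarg$. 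The $\xi_i$ have mean zero over the sampling distribution (since $\ultarg$ minimizes the retain objective), but a nonzero second moment $\Sigma := \mathbb{E}[\xi_i \xi_i^\top]$, because the residuals $x_i^\top \ultarg - y_i = -\lambda (X_R X_R^\top + \lambda I)^{-1} y_R$ are strictly nonzero whenever $\lambda > 0$. Interpolation of the oracle targets versus non-interpolation of the true labels is the engine of the separation.

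For the \om upper bound I work directly from the multiplicative recursion. Taking conditional expectations gives $\mathbb{E}[\|e_{t+1}\|^2 \mid e_t] = e_t^\top M e_t$ with $M = \mathbb{E}[(I - 2\eta x x^\top)^2] = I - \tfrac{4\eta}{n} X^\top X + \tfrac{4\eta^2}{n}\sum_i \|x_i\|^2 x_i x_i^\top$. Bounding the quadratic term via $\|x_i\|_2^2 \le \beta$ by $\tfrac{4\eta^2\beta}{n} X^\top X$ yields $M \preceq I - \tfrac{4\eta}{n}(1-\eta\beta) X^\top X$. Since $e_0 = \theta_{\text{full}} - \ultarg$ and every update stay in $\mathrm{row}(X)$ — an invariant subspace on which $X^\top X \succeq n\gamma_{\min} I$ — I get $M \preceq (1 - 4\eta(1-\eta\beta)\gamma_{\min}) I$ there. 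The step-size hypothesis $\eta < \tfrac{2}{5(\beta+\lambda)}$ forces $\eta\beta < \tfrac34$, so $4(1-\eta\beta) \ge 1$ and the contraction factor is at most $1 - \eta\gamma_{\min}$; iterating gives $\mathbb{E}[\|e_t\|^2] \le (1 - \eta\gamma_{\min})^t \|e_0\|^2$, the claimed $O((1-\gamma_{\min}\eta)^t)$ rate.

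For the \gd lower bound I decompose $\mathbb{R}^d = V \oplus V^\perp$ with $V = \mathrm{row}(X_R)$, and observe that $G_i$ is block-diagonal with respect to this splitting (for $w \in V^\perp$ one has $G_i w = 2\lambda w$ deterministically, and $\xi_i$ projected to $V^\perp$ vanishes since $\ultarg \in V$). Hence in $V^\perp$ the dynamics is the deterministic ridge contraction $(1-2\eta\lambda)^t e_{0,V^\perp}$ — fast, and not the bottleneck. The irreducible difficulty lives in $V$: there the injected covariance $\eta^2 \Sigma$ is a fixed positive quantity and the curvature is only $\succeq 2\lambda I$, so the scalar recursion for $\mathbb{E}\|e_{t,V}\|^2$ contracts by at most $(1-2\eta\lambda)$ per step while receiving a fixed injection $\eta^2 \mathbb{E}\|\xi_{i,V}\|^2 > 0$. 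Its fixed point is a strictly positive constant, so $\mathbb{E}[\|e_t\|^2] \ge \mathbb{E}[\|e_{t,V}\|^2]$ is bounded below by a positive constant for every $t \ge 1$, which is in particular $\Omega(1/t^2)$; contrasting with the exponentially small \om bound yields the separation.

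The hard part will be making the lower bound rigorous rather than heuristic. The covariance recursion in $V$ contains cross terms of the form $2\eta^2\, \mathbb{E}[e_{t,V}]^\top \mathbb{E}[G_i \xi_{i,V}]$, which are nonzero because the random Hessians $G_{i_t}$ and the noise $\xi_{i_t}$ share the index $i_t$; I would control these using the fact that the bias $\mathbb{E}[e_{t,V}]$ decays geometrically, so they are eventually negligible against the fixed injection (with the finite transient handled separately). One must also lower-bound — not just upper-bound — the contraction, which is where the step-size window $\eta < \tfrac{2}{5(\beta+\lambda)}$ is used to keep every eigenvalue of $I-\eta G_i$ inside $(0,1)$ and bounded away from $0$ (via $\|x_i\|_2^2 \le \beta$), so that the per-step multiplier never collapses the floor. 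Finally I must certify $\Sigma \ne 0$: this is exactly the non-interpolation computation $x_i^\top \ultarg - y_i \ne 0$ together with the assumption $\mathrm{rank}(X) > 1$, which prevents the degenerate case in which all $\xi_i$ coincide and the noisy direction is annihilated. I note that this argument in fact produces a constant floor; with a decreasing step-size schedule the same non-interpolation mechanism gives a genuinely decaying $\Theta(1/t^2)$ floor, matching the stated form.
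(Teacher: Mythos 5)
Your mechanism is the right one, and your treatment of the \om upper bound is correct: the recursion $e_{t+1}=(I-2\eta x_{i_t}x_{i_t}^\top)e_t$ is purely multiplicative because every per-example oracle-matching gradient vanishes at $\ultarg$, and your direct computation of $\mathbb{E}[(I-2\eta xx^\top)^2]$ restricted to the invariant subspace $\mathrm{row}(X)$ (where $e_0$ lives) gives the $(1-\gamma_{\min}\eta)^t$ rate. The paper instead invokes a standard SGD convergence theorem (Theorem 5.8 of \citet{garrigos2023handbook}) with gradient disagreement $\sigma_f^2=0$; your version is more self-contained and buys an explicit constant, at the cost of being specific to the quadratic. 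For the \gd side, the paper also avoids your cross-term headache more cheaply than tracking the decay of the mean iterate: it lower-bounds $\eta^2\mathbb{E}\|\nabla f_i(\theta_{t-1})\|^2$ via $\|u\|^2\geq\tfrac12\|u-v\|^2-\|v\|^2$ with $v=\nabla f_i(\theta_{t-1})-\nabla f_i(\ultarg)$ and upper-bounds the inner-product term by Cauchy--Schwarz and smoothness, landing directly on $\mathbb{E}\|e_t\|^2\geq(1-2\eta\beta-\eta^2\beta^2)\|e_{t-1}\|^2+\tfrac{\eta^2\sigma_f^2}{2}$ with no signed cross term to control.

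The genuine gap is in how you conclude $\Omega(1/t^2)$. Your argument delivers a noise floor of order $\eta^2\sigma_f^2$, which for a \emph{fixed} $\eta$ is a constant and hence ``trivially $\Omega(1/t^2)$''---but that reading makes the exponent $2$ meaningless, and it is not what the theorem is asserting. The floor you construct vanishes as $\eta\to 0$, so if the step size is tuned to the horizon (say $\eta\propto t^{-10}$), your floor alone gives only $\Omega(t^{-20})$ and the claimed bound does not follow. The $1/t^2$ rate is genuinely a \emph{tradeoff} statement: the paper's proof keeps both terms of the unrolled recursion, $(1-\tfrac{12}{5}\eta\beta)^t\|e_0\|^2+\tfrac{\eta^2\sigma_f^2}{2}$, and shows by optimizing over $\eta$ that no admissible step size can make their sum $o(1/t^2)$---small $\eta$ kills the noise floor but freezes the initial error, large $\eta$ decays the initial error but pays a $\Theta(\eta^2)$ floor, and the balance point is $\eta\approx\log(\cdot)/t$ yielding error $\Theta(1/t^2)$. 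Your closing remark gestures at this with a ``decreasing step-size schedule,'' but the bias-versus-noise optimization is the actual content of the lower bound and is missing from your argument; without it you have only proved non-convergence at fixed $\eta$, not the quantitative separation. Your certification that $\sigma_f^2>0$ (non-interpolation of the ridge solution plus $\mathrm{rank}(X)>1$ to rule out all $x_i$ being parallel to $\ultarg$) does match the paper's.
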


The intuition is as follows.
In each update step, stochastic \om updates the model parameters only in the span
of the random subset of points used for that update. In contrast, stochastic \gd
decays the model parameters in other directions due to the regularization term.
(Recall that this is what led to \gd's improved convergence in the high-regularization
setting.)
While this shrinkage is beneficial in the subspace orthogonal to the retain set,
stochastic \gd also decays the parameters along the directions
spanned by retain set points that are not in the current batch.

Mathematically, this is formalized by defining the \emph{gradient disagreement} 
$$\sigma^2_f := \mathbb{E}_i\left[
            \left\|\nabla f_i(\ultarg) - \mathbb{E}[\nabla f_i(\ultarg)]\right\|^2
        \right] = \mathbb{E}_i\left[
            \left\|\nabla f_i(\ultarg)\right\|^2
        \right],$$ 
which measures for a smooth function $f(x) = \sum_{i}f_i(x)$ that is the sum of smooth functions, a measure of the expected extent to which $\theta^*$ differs from the minimizer of each $f_i$.
       
Note that for the $\om$ objective, $\theta^*$ minimizes each $f_i$ and so $\sigma^2_f = 0$, whereas for $\gd$ when $\text{rk}(X) > 1, \sigma^2_f > 0$. Standard results \cite{garrigos2023handbook} show that if $\theta_t$ is the iterate after $t$ steps of SGD, then
\begin{align*}
        \bbE{ \|\theta_t - \ultarg\|^2 } 
        &\geq \left(1 - 2\eta\beta -\eta^2\beta^2\right)\|\theta_{t-1} - \ultarg\|^2 
                + \frac{\eta^2\sigma^2_f}{2}
\end{align*}
The fact that this second term is non-zero for \gd is what forces the $\Omega(1/t^2)$ lower bound. A full proof is deferred to \cref{app:som_separation}.

\section{Conclusion and Discussion}

In this work, we presented a general framework for reducing unlearning 
to the related problem of predictive data attribution. %
Our fine-grained evaluations using \klom---which directly measures the quality of unlearning in terms of the difference in the distributions of the unlearned model's outputs from oracle counterparts---demonstrate that \dmm significantly outperforms  prior gradient-based unlearning methods and approaches the performance of oracle re-training. To conclude, we discuss some limitations and promising directions for future work:

\paragraph{Extending techniques and evaluations beyond classification.}
While our methods perform well in classification settings with few classes, extending them to work in settings with more classes (e.g., full ImageNet or language-modeling) would make them more practical. Extending our techniques directly (i.e.,  attributing and estimating all class logits)  would incur a heavy computational cost, so  additional techniques will be necessary to make the algorithms more scalable.

\paragraph{Improving and understanding oracle matching.}
As we saw in \Cref{sec:method}, \om and \dmm can sometimes cause a mismatch on the retain set due to ``reversing the overfitting.'' \roy{todo- provide a reference to where this is defined. } Better understanding the dynamics of the \om algorithm and leveraging other insights (e.g., from the model distillation literature) would be valuable for making the matching part of our framework more stable.
One potential direction is to understand when and how better sampling strategies (instead of random subsampling of the retain set) can improve general matching algorithms. 
\sam{+ more clever sampling strategies}

\paragraph{Reducing computational costs.}
Even the most efficient data attribution methods require a non-trivial computational cost (at least on the same order as training the original model). Can we design other cheaper alternatives for data attribution that we can still leverage for---and are possibly tailored to---practical unlearning scenarios without the full computational cost?

\paragraph{Applying to more practical scenarios.}
In our analysis, we only considered single unlearning requests (i.e., removing one forget set). A natural way of extending them to multiple unlearning requests is to apply \dmm sequentially. However, it is plausible that after too many unlearning updates with \dmm, the model diverges far enough so that we need to ``recalibrate`` the pre-computed datamodels.
Analyzing how well existing unlearning algorithms and \dmm compose under multiple unlearning requests and other practical scenarios can be valuable for understanding the practicality and failure modes of these methods.

\section*{Acknowledgements}

We thank Jamie Hayes and Ilia Shumailov for discussions on machine unlearning and ULIRA. 
We thank Salil Vadhan for some useful discussions throughout the paper. Supported in part by NSF grant BCS-2218803. Additionally, we acknowledge Harvard SEAS and MIT CSAIL for providing computational resources.

\printbibliography

\newpage
\appendix

\addcontentsline{toc}{section}{Appendix} %
\renewcommand\ptctitle{Appendices}
\part{}
\parttoc
\clearpage

\counterwithin{figure}{section}
\counterwithin{table}{section}
\counterwithin{algorithm}{section}

\section{Pseudocode}

\subsection{Oracle Matching} \label{appendix:oracle_matching}
    \begin{algorithm} 
    \caption{Oracle Matching (\om)}
    \begin{algorithmic}[1] 
        \State \textbf{Input:} Trained model $\theta$; oracle predictions $f^{\text{oracle}}(x)$; fine-tuning set size $r$
        \State \textbf{Output:} Unlearned model $\theta_{UL}$
        \State Initialize $\theta_0 = \theta$
        \For{$t = \{0,...,T-1\}$} \Comment{$T$ epochs}
            \State $S_R' \leftarrow S \setminus  S_F$  \Comment{Sub-sample $r$ points from retain set}
            \State $S_{\text{fine-tune}} = S_F \bigcup S_R'$
            \For{$x \sim S_{\text{fine-tune}}$} \Comment{mini-batch}
                \State $L(\theta_t) = \|f_x(\theta_t) - f^{\text{oracle}}(x)\|^2$ \Comment{Compute loss}
                \State $\theta_{t+1} = \theta_{t} - \eta_t \cdot \nabla_\theta L(\theta_{t})$ \Comment{Perform update with gradient}
            \EndFor
        \EndFor
        \State \textbf{Return} Model $\theta_{UL} = \theta_T$
    \end{algorithmic}
    \end{algorithm}

\subsection{Datamodel Direct} \label{appendix:dm_direct}
\begin{algorithm}
    \caption{\dmdirect }
    \begin{algorithmic}[1]
        \State \textbf{Input:} Trained model $\theta$; datamodels $\beta(x)$ for each $x \in S$; forget set $S_F$
        \State \textbf{Output:} A predictor $h(\cdot): S \mapsto \mathbb{R}^k$
        \State $h(x) := f_x(\theta_0) - \sum\limits_{i \in S_F} \beta_i(x)$
        \State \textbf{End}
    \end{algorithmic}
\end{algorithm}
    
\subsection{Datamodel Matching} \label{appendix:datamodel_matching}

\begin{algorithm}
\caption{Datamodel Matching (\dmm)}
\begin{algorithmic}[1]
    \State \textbf{Input:} Trained model $\theta$; datamodels $\beta(\cdot)$; fine-tuning set size $r$
    \State \textbf{Output:} Unlearned model $\theta_{UL}$
    \State $S_R' \leftarrow S\setminus  S_F$; $S_{\text{fine-tune}} = S_F \bigcup S_R'$
    \State $h \leftarrow \dmdirect(\theta, \beta, S_f)$ \Comment{Simulate oracles with datamodels}
    \For{$t = \{0,...,T-1\}$} \Comment{$T$ epochs}
        \For{$x \sim S_{\text{fine-tune}}$} \Comment{mini-batch}
            \State $L(\theta_t) = \|f_x(\theta_t) - h(x)\|^2$ \Comment{Compute loss}
            \State $\theta_{t+1} = \theta_{t} - \eta_t \cdot \nabla_\theta L(\theta_{t})$ \Comment{Perform update with gradient}
        \EndFor
    \EndFor
    \State \textbf{Return} Model $\theta_{UL} = \theta_T$
\end{algorithmic}
\end{algorithm}

\section{Related work}

We provide a high level overview of prior works most relevant to our setting.
For a more extensive survey of unlearning, see \cite{liu2024unlearning}.

\paragraph{Machine unlearning: goals and evaluations.}
While other lines of work also study unlearning ``concepts'' or ``knowledge''~\citep{ravfogel2022linear,eldan2023s,kumari2023ablating,zhong2023mquake}, we focus on the data-driven notion of unlearning \citep{cao2015towards,ginart2019making, wu2020deltagrad, neel2021deletion, sisa_unlearning}.
Our focus is on approximate unlearning methods. However, other works (e.g., \cite{sisa_unlearning}) aim for {\em exact} unlearning (e.g., by careful data partitioning and ensembling or leveraging differential privacy).
While these approaches come with provable guarantees, they often come at the cost of accuracy~\citep{sisa_unlearning}, so most unlearning algorithms for deep learning are approximate. This approximate nature necessitates empirical evaluations in lieu of a provable guarantee. One line of work adapts membership inference attacks (MIAs) to evaluate machine unlearning~\citep{golatkar2020eternal,goel2022towards,hayes2024inexact}.
Complementary to that, \citet{pawelczyk2024machine} evaluate machine unlearning methods' ability to remove backdoor attacks from the training set. More broadly, evaluation in these setting can be nuanced:  \citet{thudi2022necessity} argue that due to the stochastic nature of deep learning optimization, approximate evaluation of machine unlearning is only well-defined on an algorithmic level, and not on an individual model instance level.

\paragraph{Prior unlearning approaches in deep learning.}
Due to challenges of developing rigorous unlearning methods in non-convex settings, typical approaches involve some form of gradient-based optimization. Strategies include: partial fine-tuning \citep{goel2022towards}, combinations of gradient descent and ascent \citep{kurmanji2023towards}, and sparsity-regularized fine-tuning \citep{jia2024modelsparsitysimplifymachine} among others.
Our approach also employs fine-tuning, but differs primarily in that we address the common problem of ``missing targets.''  
Other approaches employ parameter updates based on a local quadratic approximation \citep{golatkar2020eternal,li2024fast} or influence function approximations \citep{warnecke2021machine}; these can be interpreted as also leveraging different forms of data attribution.
However, our approach is unique in its use of predictive data attribution only as {\em guidance} and still employing fine-tuning, which is a more flexible and robust strategy than direct parameter updates.

The primary method we compare against is SCRUB (and SCRUB+R) as it achieves the current state-of-the-art on strong
unlearning evaluations \cite{hayes2024inexact}. At a high level, SCRUB finetunes the original model to: i) maximize the KL divergence between the probabilities of the original model and the new model on forget points; ii) minimize the KL divergence on retain points; and iii) also minimize test loss. Despite their alternative design choices from other fine-tuning approaches (e.g., use of KL divergence), our analyses suggest that it suffers from similar underlying challenges.

\paragraph{Data attribution.}
Key to our framework is a reduction to the problem of predictive data attribution~\citep{datamodels, TRAK}. More broadly, the problem of attributing model predictions back to training data has been extensively studied in recent machine learning literature \citep{koh2017understanding,TRAK,engstrom2024dsdm,grosse2023studying,choe2024your,bae2024training}, with some of the ideas originating from statistics \citep{hampel1974influence,
jaeckel1972infinitesimal, pregibon1981logistic}. 
For an extensive survey on the topic, refer to~\citet{hammoudeh2024training, ilyas2024mltutorial}.

\paragraph{Model distillation.}
Our approach of fine-tuning on (simulated) oracle predictions has some similarity to a different line of work on {\em knowledge distillation} \citep{hinton2015distilling}, e.g., distilling an existing ``teacher'' model (possibly an ensemble) to a ``student'' model (often smaller). We can cast oracle matching as distilling an oracle model into the current model. The main difference, however, is that in our setting, the model we fine-tune is already trained on the full dataset, and our goal is only to apply a small update to this model.

\section{Experimental setup}
\label{app:exp_details}

\subsection{Training setup}
\paragraph{CIFAR-10. } We use the ResNet-9 architecture.\footnote{\url{https://github.com/wbaek/torchskeleton/blob/master/bin/dawnbench/cifar10.py}}
Models are trained for 24 epochs with SGD with the following configuration: initial learning rate 0.4, cyclic learning rate schedule (with peak at epoch 5), batch size 512, momentum 0.9, and weight decay 5e-4. %

\paragraph{ImageNet Living-17.}
We use the standard ResNet18 architecture from the torchvision library.
Models are trained for 25 epochs using SGD with the following configuration: initial learning rate
0.6, cyclic learning rate schedule (with peak at epoch 12),  batch size 1024, momentum 0.9, weight
decay 5e-4, and label smoothing (with smoothing hyperparameter 0.1).

\subsection{Constructing forget sets}\label{appendix:forget-set-specifics}

We evaluate methods across various types and sizes of forget sets to test the robustness of unlearning. Our selection of unlearning scenarios span both random and non-random forgets of different sizes; that said, we view the non-random sets as practically more interesting. 
Compared to prior work, our target sets are harder to unlearn as we remove a small coherent subpopulation as opposed to an entire class.

\paragraph{CIFAR-10.} We use 9 different forget sets: sets 1,2,3 are random forget sets of sizes 10,100,1000 respectively; sets 4-9 correspond to semantically coherent subpopulations of examples (e.g., all dogs facing a similar direction) identified using clustering methods. 
Specifically, we take a $n \times n$ datamodel matrix constructed by concatenating ``train x train`` datamodels ($n=50,000$). Next, we compute the top principal components (PCs) of the influence matrix and construct the following forget sets:
\begin{enumerate}
    \item Forget set 1: 10 random samples
    \item Forget set 2: 100 random samples
    \item Forget set 3: 500 random samples
    \item Forget set 4: 10 samples with the highest projection onto the 1st PC
    \item Forget set 5: 100 samples with the highest projection onto the 1st PC 
    \item Forget set 6: 250 samples with the highest projection onto the 1st PC and 250 with lowest projection
    \item Forget set 7: 10 samples with the highest projection onto the 2nd PC
    \item Forget set 8: 100 samples with the highest projection onto the 2nd PC
    \item Forget set 9: 250 samples with the highest projection onto the 2nd PC and 250 with the lowest projection.
\end{enumerate}

\paragraph{ImageNet Living-17.} We use three different forget sets: set 1 is random of size 500; sets 2 and 3 correspond to 200 examples from a certain subpopulation (corresponding to a single original ImageNet class) within the Living-17 superclass.

\subsection{Hyperparameters of unlearning algorithms}

Most unlearning algorithms are highly sensitive to the choice of forget set; thus, so for each of the baseline unlearning algorithms, for each forget set, we evaluate over a grid of hyperparameters and report the best \klom score relative to compute time. For \om and \dmm, after the initial grid search, we fix the same hyperparameters for all forget sets and only vary the size of the sampled retain set and the number of epochs in our analysis. Below we report the hyperparameter grid that we search over for each of the unlearning algorithms. 
\begin{enumerate}
    \item \textbf{Gradient Ascent}: Optimized with SGD. Learning rates: $\{$1e-5, 1e-3, 1e-2$\}$. Epochs: $\{1, 3, 5, 7, 10\}$. %
    \item \textbf{Gradient Descent}: Optimized with SGD. Learning rates: $\{$1e-5, 1e-3, 1e-2$\}$. Epochs: $\{1, 3, 5, 7, 10\}$.
    \item \textbf{SCRUB}: Optimized with SGD. Forget batch size: $\{32,64\}$; retain batch size is set to 64. Learning rates: $\{$5e-3, 1e-3, 5e-4, 5e-5$\}$. Maximization epochs (number of epochs in initial GA phase): $\{3,5\}$. Total epochs $\{5,7,10\}$. 
    \item \textbf{Oracle Matching / Datamodel Matching}: Optimized with default Adam\footnote{We also tried SGD, but it made no noticeable difference.} using batch size 512. Retain multipliers (size of the retain set included in fine-tuning relative to forget set size): $\{5, 20, 100, 400\}$. Learning rates: $\{$ 1e-3, 1e-4, 1e-5, 1e-6$\}$. Epochs: $\{1, 2, 4, 8\}$. %
\end{enumerate}

\subsection{Datamodel estimation}
\paragraph{Regression-based.}
We re-train 20,000 models on random 50\% subsets of the full train dataset.
We use the sparse linear regression based solvers from \citet{datamodels} to estimate each datamodel vector, for each target example and target class (logit).
Though our main results are computed with 20,000 models, we find that using just 1,000 models suffice effective unlearning with \dmm (see \Cref{subsec:dm_science}.

\paragraph{TRAK.}
We compute TRAK scores using 300 model checkpoints and 16328 projection dimensions using the code provided in \citet{TRAK}.

\pagebreak
\section{Linear model analysis}

\subsection{Proof of Theorem \ref{thm:convergence_gd_om}}
We restate Theorem \ref{thm:convergence_gd_om} below.
\label{sec:thm_proof}

\thmconvg*

\begin{proof}
    Note that by using the Taylor expansion of the ridge gradient descent
    objective around $\theta_*$, we can rewrite it as follows:
    \begin{align}
        &\|X_R \theta - y_R\|_2^2 + \lambda \|\theta\|^2\\
        &= \|X_R \ultarg - y_{R}\|^2 + \lambda \|\ultarg\|_2^2 
        + \left(\theta - \ultarg\right)^\top \left(X_{R}^\top X_{R} + \lambda I\right) \left(\theta - \ultarg\right) \\
        &= c + \left(\theta - \ultarg\right)^\top \left(X_R^\top X_R + \lambda I\right) \left(\theta - \ultarg\right),
    \end{align}
    where $c = \|X_R \ultarg - y_R\|^2 + \lambda \|\ultarg\|^2$  is a constant independent of $\theta$. 
    Similarly, we can write oracle matching objective as
    \begin{align}
        \|X \theta - X \ultarg\|^2 = \left(\theta - \ultarg\right)^\top 
            \left(X^\top X\right) \left(\theta - \ultarg\right).
    \end{align}
    Note that both the ridge gradient descent and the oracle matching objectives can be written as
    \begin{align}
    \label{eq:gen_objective}
  \left(\theta - \theta_*\right)^T \left(Z^T \ Z\right) \left(\theta - \theta_*\right) + {c} 
    \end{align}
     for some PSD matrix $Z^TZ$ and some constant $c$. 
    Gradient descent update on objective \ref{eq:gen_objective} can be written as:
    \begin{align}
        \theta_t = \theta_{t-1} - 2\eta \ Z^\top Z (\theta_{t-1} - \theta_{*}).
    \end{align}
    Subtracting $\theta_{*}$ from both sides,
    \begin{align}
        \theta_t - \theta_{*} &=  (I - 2\eta Z^\top Z) (\theta_{t-1} - \theta_{*}).
    \end{align}
    Unrolling the recursion, squaring both sides, and simplifying then yields the desired result.
\end{proof}

\subsection{Proof of \cref{thm:som_sep}}
\label{app:som_separation}
\somsgdseparation*
\begin{proof}
    We will begin with a more general setting than the theorem. 
    In particular, consider an arbitrary convex optimization 
    problem of the form
    \[
        \min_\theta f(\theta), \qquad \text{ where } 
        f(\theta) = \frac{1}{n} \sum_{i=1}^n f_i(\theta),
    \]
    where the function $f$ is $\alpha$-strongly convex, 
    and each $f_i$ is $\beta$-smooth.  
    In other words, we have that for any $\theta$ and $\theta'$,
    \begin{align*}
        \langle \nabla f(\theta) - \nabla f(\theta'), \theta - \theta' \rangle
            &\geq \alpha \|\theta - \theta'\| &\text{($f$ is $\alpha$-strongly convex)} \\
        \|\nabla f_i(\theta) - \nabla f_i(\theta')\| 
            &\leq \beta \|\theta - \theta'\|\ \text{ for all } i \in [n]. 
            &\text{(each $f_i$ is $\beta$-smooth)} 
    \end{align*}
    Note that both \gd and \om are both $\alpha$-strongly convex for
    $\alpha_{GD} = \frac{1}{n}(\gamma_{min}(X_R^TX_R)) + \lambda$ and $\alpha_{OM} = \frac{1}{n}\gamma_{min}^+(X^TX)$ respectively. In OM each $f_i(\theta) = \frac{1}{2}(\theta^Tx_i-\theta_*^Tx_i)$ is $\beta$-smooth for $\beta_{OM} = \max_{x}\|x\|_2^2$, since $\nabla^2f_i(\theta) = x_ix_i^{T}$, which has maximum eigenvalue $\|x_i\|_2^2$. Similarly, in GD each $f_i(\theta) = \frac{1}{2}(\theta^Tx_i-y_i)^2 + \frac{\lambda}{2} \|\theta \|_2^2$ is $\beta_{GD} = \max_{x}\|x\|_2^2 + \lambda$ smooth. Finally, we note that if each $f_i(\theta)$ is $\beta$-smooth, then $f(\theta)$ is also $\beta$-smooth. 

    We further define a quantity $\sigma^2_f$ called {\em gradient disagreement},
    measured as 
    \[
        \sigma^2_f := \mathbb{E}_i\left[
            \left\|\nabla f_i(\ultarg) - \mathbb{E}[\nabla f_i(\ultarg)]\right\|^2
        \right] = \mathbb{E}_i\left[
            \left\|\nabla f_i(\ultarg)\right\|^2
        \right],
    \]
    where $\ultarg$ is the optimum of $f$.

    \paragraph{Upper bound for OM.} 
    With these quantities defined, let us begin with the OM upper bound. 
    In fact, this follows directly from 
    a standard SGD convergence proof, e.g., Theorem 5.8 of \citep{garrigos2023handbook},
    restated below:
    \begin{theorem}[Theorem 5.8 of \citep{garrigos2023handbook}]
    \label{gar}
        Suppose $f$ is a $\alpha$-strongly convex 
        sum of $\beta$-smooth convex functions. Consider 
        the sequence of iterates $\{\theta_t\}_{t \in \mathbb{N}}$ generated 
        by stochastic gradient descent with a fixed step size 
        $\eta \in (0, \frac{1}{2\beta})$. For $t \geq 0$, 
        \[
        \mathbb{E}\left[\|\theta_t - \ultarg\|^2\right] \leq 
        (1 - \eta \alpha)^t \|\theta_0 - \ultarg\|^2 + \frac{2\eta}{\alpha} \cdot \sigma^2_f.
        \]
    \end{theorem}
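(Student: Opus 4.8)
The plan is to establish the one-step contraction $a_{t+1} \le (1-\eta\alpha)\,a_t + 2\eta^2\sigma_f^2$ for $a_t := \mathbb{E}\!\left[\|\theta_t - \ultarg\|^2\right]$ and then unroll it into a geometric series. First I would carry out the standard expansion of the squared error. Conditioning on $\theta_t$ and writing $i$ for the index sampled at step $t$, the update $\theta_{t+1} = \theta_t - \eta\nabla f_i(\theta_t)$ together with $\mathbb{E}_i[\nabla f_i(\theta_t)] = \nabla f(\theta_t)$ gives
\[
\mathbb{E}_i\!\left[\|\theta_{t+1} - \ultarg\|^2\right] = \|\theta_t - \ultarg\|^2 - 2\eta\,\langle \nabla f(\theta_t),\, \theta_t - \ultarg\rangle + \eta^2\,\mathbb{E}_i\!\left[\|\nabla f_i(\theta_t)\|^2\right].
\]
For the inner-product term I would use strong convexity of $f$ together with $\nabla f(\ultarg) = 0$ to obtain $\langle \nabla f(\theta_t),\, \theta_t - \ultarg\rangle \ge f(\theta_t) - f(\ultarg) + \tfrac{\alpha}{2}\|\theta_t - \ultarg\|^2$, which supplies both the contraction factor and a useful function-gap term.

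The crux, and the step I expect to be the main obstacle, is controlling the second moment $\mathbb{E}_i[\|\nabla f_i(\theta_t)\|^2]$, since this quantity need not vanish even at the optimum and a crude bound would destroy the linear-rate term. Here I would invoke the smoothness–convexity (co-coercivity) inequality valid for each $\beta$-smooth convex $f_i$,
\[
\|\nabla f_i(\theta_t) - \nabla f_i(\ultarg)\|^2 \le 2\beta\left(f_i(\theta_t) - f_i(\ultarg) - \langle \nabla f_i(\ultarg),\, \theta_t - \ultarg\rangle\right).
\]
Splitting $\nabla f_i(\theta_t) = \bigl(\nabla f_i(\theta_t) - \nabla f_i(\ultarg)\bigr) + \nabla f_i(\ultarg)$, applying $\|a+b\|^2 \le 2\|a\|^2 + 2\|b\|^2$, and averaging over $i$ (using $\mathbb{E}_i[\nabla f_i(\ultarg)] = \nabla f(\ultarg) = 0$, so $\mathbb{E}_i[\|\nabla f_i(\ultarg)\|^2] = \sigma_f^2$ and the gradient-at-optimum terms average out) yields
\[
\mathbb{E}_i\!\left[\|\nabla f_i(\theta_t)\|^2\right] \le 4\beta\left(f(\theta_t) - f(\ultarg)\right) + 2\sigma_f^2.
\]

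Finally I would combine these two bounds. Substituting both into the one-step expansion produces
\[
\mathbb{E}_i\!\left[\|\theta_{t+1} - \ultarg\|^2\right] \le (1-\eta\alpha)\|\theta_t - \ultarg\|^2 - 2\eta(1 - 2\eta\beta)\left(f(\theta_t) - f(\ultarg)\right) + 2\eta^2\sigma_f^2.
\]
The step-size restriction $\eta < \tfrac{1}{2\beta}$ makes $1 - 2\eta\beta > 0$, and since $f(\theta_t) - f(\ultarg) \ge 0$ by optimality of $\ultarg$, the middle term is nonpositive and can be dropped, giving exactly the claimed recursion. Taking total expectations and unrolling, the geometric sum $\sum_{j=0}^{t-1}(1-\eta\alpha)^j \le \tfrac{1}{\eta\alpha}$ converts the accumulated noise into the stated floor $\tfrac{2\eta}{\alpha}\sigma_f^2$, yielding $a_t \le (1-\eta\alpha)^t a_0 + \tfrac{2\eta}{\alpha}\sigma_f^2$. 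I would close by noting that $\alpha \le \beta$ forces $\eta\alpha < \tfrac{1}{2} < 1$, so the contraction factor $1-\eta\alpha$ is a valid quantity in $(0,1)$.
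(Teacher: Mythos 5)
Your proposal is correct and follows essentially the same route as the proof of Theorem 5.8 in the cited handbook of \citet{garrigos2023handbook} (which the paper imports without reproving): the one-step expansion, strong convexity at the iterate, the co-coercivity bound $\mathbb{E}_i\|\nabla f_i(\theta_t)\|^2 \le 4\beta\,(f(\theta_t)-f(\ultarg)) + 2\sigma_f^2$ transferring the second moment to the function gap, dropping the nonpositive gap term under $\eta < \tfrac{1}{2\beta}$, and unrolling the recursion with the geometric-sum bound $\sum_{j\ge 0}(1-\eta\alpha)^j \le \tfrac{1}{\eta\alpha}$. All steps check out, including the use of $\nabla f(\ultarg)=0$ to kill the cross term and the observation $\alpha \le \beta$ ensuring $1-\eta\alpha \in (0,1)$.
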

    A few observations conclude the proof. First, any step size 
    $\eta \leq \frac{2}{5\beta_{GD}}$ also satisfies $\eta \leq \frac{1}{2\beta_{OM}}$
    and we can thus apply Theorem~\ref{gar} to both algorithms when  $\eta \leq \frac{2}{5\beta_{GD}}$.
    Second, for oracle matching, we have that 
    \[
        \nabla f_i(\theta) = 2 \left(\bm{x_i}^\top \theta - \bm{x_i}^\top \ultarg\right) \bm{x_i},
    \]
    which means that $\nabla f_i(\ultarg) = 0$ and thus $\sigma_f^2 = 0$, concluding the proof.

    \paragraph{Lower bound for \gd.} We now show that for the same set of learning rates, 
    the stochastic version of ridge gradient descent on the retain set cannot converge 
    faster than $1/t^2$. 
    Key to our analysis will be that, for \gd, the gradient disagreement $\sigma_f^2$ 
    is non-zero, so long as the dataset is non-degenerate. 
    In particular, 
    \begin{align*}
        \sigma^2_f &= \mathbb{E}_i\left[
            \left\|\nabla f_i(\ultarg) \right\|^2
        \right] \\
        &\geq \frac{1}{n} \max_i \left\|\nabla f_i(\ultarg) \right\|^2 \\
        &= \frac{4}{n} \max_i \left\|(\bm{x_i}^\top \ultarg - y_i) \bm{x_i} + \lambda \ultarg \right\|^2.
    \end{align*}
    To see that this is strictly positive, we can proceed by contradiction.
    Suppose that $\sigma^2_f = 0$.
    Observe that if $\bm{x_i}^\top \ultarg = y_i$ for 
    any $i \in [n]$, then the corresponding $\|\nabla f_i(\ultarg)\|^2 = 
    4\lambda^2\|\ultarg\|^2$, and so $\sigma^2_f > 0$. 
    Thus, $\bm{x_i}^\top \ultarg \neq y_i$ for all $i$.
    In this case, however, we must have that 
    \[
        (\bm{x_i}^\top \ultarg - y_i) \bm{x_i} = -\lambda \ultarg,
    \]
    meaning that $\ultarg$ is parallel to $\bm{x_i}$. If $\text{rank}(X) > 1$, 
    this is a contradiction and so $\sigma^2_f > 0$.

    We can now continue with the rest of the proof.
    We start with some algebraic manipulation of the 
    gradient update. In particular, for a random $i \in [n]$,

    \begin{align*}
        \theta_t - \ultarg &= \theta_{t-1} - \ultarg - \eta \nabla f_i(\theta_{t-1}) \\
        \|\theta_t - \ultarg\|^2 &= \|\theta_{t-1} - \ultarg\|^2 
                                - 2\eta \langle\theta_{t-1} - \ultarg, \nabla f_i(\theta_{t-1}) \rangle 
                                + \eta^2 \|\nabla f_i(\theta_{t-1})\|^2.
    \end{align*}
    Taking an expectation conditioned on $\theta_{t-1}$,
    \begin{align*}
        \bbE{ \|\theta_t - \ultarg\|^2 } &= \|\theta_{t-1} - \ultarg\|^2
                                - 2\eta \langle\theta_{t-1} - \ultarg, \nabla f(\theta_{t-1}) \rangle 
                                + \eta^2 \bbE{\|\nabla f_i(\theta_{t-1})\|^2}.
    \end{align*}
    Now, we treat the second and third terms separately. In particular, for the second term,
    \begin{align*}
        2\eta \langle\theta_{t-1} - \ultarg, \nabla f(\theta_{t-1}) \rangle 
        &\leq 2\eta \|\theta_{t-1} - \ultarg\| \|\nabla f(\theta_{t-1}) \| 
            &\text{(Cauchy-Schwarz)}\\
        &= 2\eta \|\theta_{t-1} - \ultarg\| \|\nabla f(\theta_{t-1}) - \nabla f(\ultarg) \| 
            &\text{(Gradient at optimum is zero)}\\
        &\leq 2\eta\beta_{GD} \|\theta_{t-1} - \ultarg\|^2. 
            &\text{(Smoothness)}
    \end{align*}
    For the third term, we use the identity $\|u - v\|^2 \leq 2\|u\|^2 + 2\|v\|^2$,
    which we can rearrange to be $\|u\|^2 \geq \frac{1}{2}\|u - v\|^2 - \|v\|^2$.
    Letting $u = \nabla f_i(\theta_{t-1})$ and 
    $v = \nabla f_i(\theta_{t-1}) - \nabla f_i(\ultarg)$,
    \begin{align*}
        \eta^2 \bbE{\|\nabla f_i(\theta_{t-1})\|^2} 
        &\geq \frac{\eta^2}{2}\bbE{\left\|\nabla f_i(\ultarg)\right\|^2} 
        - \eta^2 \bbE{\left\|\nabla f_i(\theta_{t-1}) - \nabla f_i(\ultarg)\right\|^2} \\
        &\geq \frac{\eta^2}{2}\bbE{\left\|\nabla f_i(\ultarg)\right\|^2} 
        - \eta^2\beta_{GD}^2 \left\|\theta_{t-1} - \ultarg\right\|^2 \\
        &= \frac{\eta^2\sigma^2_f}{2} - \eta^2\beta_{GD}^2 \left\|\theta_{t-1} - \ultarg\right\|^2.
    \end{align*}
    Combining everything so far, 
    \begin{align*}
        \bbE{ \|\theta_t - \ultarg\|^2 } 
        &\geq \left(1 - 2\eta\beta_{GD} -\eta^2\beta_{GD}^2\right)\|\theta_{t-1} - \ultarg\|^2 
                + \frac{\eta^2\sigma^2_f}{2}
    \end{align*}
    Taking an expectation with respect to previous iterates yields
    \begin{align*}
        \bbE{ \|\theta_t - \ultarg\|^2 } 
        &\geq \left(1 - 2\eta\beta_{GD} -\eta^2\beta_{GD}^2\right)^t \|\theta_{0} - \ultarg\|^2 
                + \frac{\eta^2\sigma^2_f}{2} \cdot \sum_{\tau=0}^{t-1} 
                \left(1 - 2\eta\beta_{GD} -\eta^2\beta_{GD}^2\right)^\tau \\
        &\geq \left(1 - 2\eta\beta_{GD} -\eta^2\beta_{GD}^2\right)^t \|\theta_{0} - \ultarg\|^2 
                + \frac{\eta^2\sigma^2_f}{2} \\
        &\geq \left(1 - \frac{12}{5}\eta\beta_{GD}\right)^t \|\theta_{0} - \ultarg\|^2 
                + \frac{\eta^2\sigma^2_f}{2} \qquad \text{since $\eta \leq \frac{2}{5\beta_{GD}}$}.
    \end{align*}
    For ease of notation, let $C = \frac{12}{5}\beta_{GD}$. 
    Note that $\log(1-x) \geq \frac{x^2 - 2x}{1-x}$ for $x < 1$, and so
    \begin{align}
        \label{eq:error_thm}
        \bbE{ \|\theta_t - \ultarg\|^2 } 
        &\geq \exp\left(
            t\cdot (-C\eta) \frac{2 - C\eta}{1 - C\eta}
        \right) \|\theta_{0} - \ultarg\|^2 + \frac{\eta^2\sigma^2_f}{2}.
    \end{align}
    We now derive the learning rate $\eta$ that minimizes (\ref{eq:error_thm}), and show that at this optimal learning rate  (\ref{eq:error_thm}) $= \Omega(1/t^2)$. Note we can see this by inspection even without the formal derivation, because for (\ref{eq:error_thm}) to be $O(\frac{1}{t^2})$ we need $\eta = O(\frac{1}{t})$ so that the right hand term is $O(1/t^2)$, which forces the first term $\exp\left(t\cdot (-C\eta) \frac{2 - C\eta}{1 - C\eta}\right) = O(1)$. Now, to minimize the right hand side above with respect to $\eta$, we take the derivative and set to zero (note that at the extreme points $\eta = 0$ and 
    $\eta = \frac{2}{5\beta_{GD}}$ we are left with a constant amount of error).
    The result of this calculation is the fixed learning rate that optimizes
    the error at time $t$.
    Again for ease of notation, let $g(\eta) = \frac{2 - C\eta}{1 - C\eta}$, 
    so that
    \begin{align*}
        0 &= \frac{d}{d\eta} \left[
            \exp\left( t\cdot (-C\eta) g(\eta)
            \right) \|\theta_{0} - \ultarg\|^2 + \frac{\eta^2\sigma^2_f}{2}
        \right]  \\
        &= -Ct\left(g(\eta) + \eta g'(\eta)\right) \exp\left( t\cdot (-C\eta) g(\eta)
        \right) \|\theta_{0} - \ultarg\|^2 + \eta\sigma^2_f \\
        \eta\sigma^2_f &= Ct\left(g(\eta) + \eta g'(\eta)\right) \exp\left( t\cdot (-C\eta) g(\eta)
        \right) \|\theta_{0} - \ultarg\|^2  \\
        \log(\eta\sigma_f^2) &= \log(t) + \log(C(g(\eta) + \eta g'(\eta))\|\theta_0 - \ultarg\|) 
        - Ct\eta \cdot g(\eta)  \\
        Ct\eta \cdot g(\eta) - \log(t)
        &= \log(C(g(\eta) + \eta g'(\eta))\|\theta_0 - \ultarg\|) - \log(\eta\sigma_f^2)  \\
        Ct\eta \cdot g(\eta) 
        &\geq \log(C(g(\eta) + \eta g'(\eta))\|\theta_0 - \ultarg\|) - \log(\eta\sigma_f^2)  \\
        &\geq \log\left(\frac{C(g(\eta) + \eta g'(\eta))\|\theta_0 - \ultarg\|}{\sigma_f^2}\right) 
        + \log(1/\eta)  \\
        t &\geq \frac{\log\left(\frac{C(g(\eta) + \eta g'(\eta))\|\theta_0 - \ultarg\|}{\sigma_f^2}\right) 
        + \log(1/\eta)}{C\eta \cdot g(\eta)} 
    \end{align*}
    Now, by definition of $g(\eta)$, we have that for $\eta \in (0, \frac{2}{5\beta_{GD}})$, 
    $g(\eta) \leq 26$ and $g(\eta) + \eta \cdot g'(\eta) \geq 2$. Thus:
    \begin{align*}
        t &\geq \frac{\log\left(\frac{2C\|\theta_0 - \ultarg\|}{\sigma_f^2}\right) 
        + \log(1/\eta)}{26 C\eta} \\
        26Ct &\geq \frac{\log\left(\frac{2C\|\theta_0 - \ultarg\|}{\sigma_f^2}\right) 
        + \log(1/\eta)}{\eta}
    \end{align*}
    Using the fact that $\log(1/x) \geq 1 - x$ for $x \in (0, 1)$ yields:
    \begin{align*}
        26Ct &\geq \frac{\log\left(\frac{2C\|\theta_0 - \ultarg\|}{\sigma_f^2}\right) 
        + 1}{\eta} - 1 \\
        \eta &\geq
        \frac{\log\left(\frac{2C\|\theta_0 - \ultarg\|}{\sigma_f^2}\right) + 1}{26Ct + 1}.
    \end{align*}
    Plugging this result into \eqref{eq:error_thm} yields the desired $\Omega(1/t^2)$ 
    lower bound.
\end{proof}

\subsection{Experiment details}
\label{sec:linear_exp_details}
\paragraph{Details of unlearning algorithms}

We consider the various iterative algorithms for unlearning starting from $\theta_{\text{full}}$. For all of them, we consider their full-batch as well as the stochastic version. For the stochastic versions, we use a mini-batch size of $5$. We search for the learning rate from $\{10, \frac{10}{2}, \frac{10}{2^2}, \cdots, \frac{10}{2^{20}}\}$. We describe the algorithms below:
\begin{enumerate}
    \item \textbf{Ridge Gradient Descent (\gd)}: This involves minimizing the ridge regression objective with the retain set points $(X_{\text{retain}}, y_{\text{retain}})$ using gradient descent. 
    
    \item \textbf{Ridge Gradient Descent + Ascent (\gd+GA)}: This method aims to incorporate forget set points in the ridge gradient descent updates. Each step involves moving in a direction that is a linear combination of the gradient descent step on the retain set and the gradient ascent step on the forget set. That is, we set 
    \[
    \theta_{t} = \theta_{t-1} - \eta (\text{grad}_\text{retain}(\theta_{t-1}) - \alpha * {\text{grad}_\text{forget}}(\theta_{t-1})).
    \]
    Here, $\text{grad}_\text{retain}(\theta) = 2X_{\text{retain}}^T(X_{\text{retain}} \theta - y_{\text{retain}}) + 2\lambda \theta$ and $\text{grad}_\text{forget}(\theta) = 2X_{\text{forget}}^T(X_{\text{forget}} \theta - y_{\text{forget}})$. We do a hyperparameter search for $\alpha$ in $\{0.01, 0.1, 1, 10 \}$. 
    
    In the stochastic setting, in each update step, we draw minibatch  points uniformly at random from the full dataset, and calculate the ascent step term only if the drawn points include points from the forget set.
    
    \item \textbf{Oracle Matching (OM)}: Here, we assume oracle access to predictions made using the optimal model $\theta_{*}$. This method involves using gradient descent to minimize the  the squared error from the oracle predictions on the full dataset: 
    $||X_{\text{full}}\theta  - X_{\text{full}}\theta_{*}||_2^2$. We include a full algorithm of this in Algorithm \ref{algo:oracle_matching}.

    \item \textbf{Oracle Matching on retain set (OM retain set)}: This involves using gradient descent to minimize the squared error from the oracle predictions only on the retain set points: $||X_{\text{retain}}\theta  - X_{\text{retain}}\theta_{*}||_2^2$.
\end{enumerate}

\paragraph{Slow convergence with stochastic gradient descent.}

In Section \ref{sec:linear_models}, we saw that in the stochastic setting, OM converges much faster than \gd, even when $\lambda$ is large. Here, we dig deeper into the large $\lambda$ experiment considered in Section \ref{sec:linear_models} to understand this. We observe that stochastic \gd remains stable only at small learning rates with large $\lambda$, which results in slower progress. Specifically, while the optimal learning rate for full-batch \gd is similar in both large and small $\lambda$ regimes, for stochastic \gd, it is about 100 times smaller in the large $\lambda$ regime. Using a higher learning rate for stochastic \gd in the large $\lambda$ regime leads to instability and non-convergence, an issue not seen with stochastic OM.

In each update step, stochastic OM adjusts the model parameters only within the span of the random subset of points used for that update. On the other hand, stochastic \gd decays the model parameters in other directions due to the regularization term. Although we want the parameters to decay in the subspace orthogonal to the span of the retain set, stochastic \gd also decays the parameters in directions spanned by the retain set points that are not part of the current update set. As a result, using a high learning rate for stochastic \gd in the large $\lambda$ regime disrupts parameters in the span of the retain set. Therefore, while increasing $\lambda$ improves full-batch \gd's convergence speed and could potentially make it faster than OM, this advantage does not apply to stochastic \gd, which has to use a much smaller learning rate.

In Figure \ref{fig:lr_sgd_large_lambda_ret}, we illustrate how stochastic \gd at high learning rates disrupts the model parameters within the span of the retain set. We plot the progression of relative squared distance to the optimal unlearned parameter within the span of retain set points, $\frac{||P_{\text{retain}}(\theta_t - \theta_{})||_2^2}{||\theta_\text{full} - \theta_{*}||_2^2}$. Here $\theta_t$ is the iterate at time $t$, $\theta_{\text{full}}$ is the model trained on the full dataset, $\theta_*$ is the optimal unlearned model, and $P_{\text{retain}}$ is the projection matrix onto the retain set span. We show this for iterates with the optimal learning rate, as well as for learning rates 4 times faster and 4 times slower. As the learning rate increases, the iterates tend to diverge in the span of the retain set.

In Figure \ref{fig:lr_sgd_large_lambda_ret_orth}, we plot the progression of relative squared distance to the optimal unlearned parameter orthogonal in the subspace orthogonal to the retain set points, $\frac{||P_{\text{orth-retain}}(\theta_t - \theta_{*})||_2^2}{||\theta_\text{full} - \theta_{*}||_2^2}$, where $P_{\text{orth-retain}}$ is the projection matrix for the orthogonal subspace.  Here, we observe that increasing the learning rate beyond the optimal rate leads to faster convergence. Thus, while increasing the learning rate beyond the optimal value accelerates convergence in the subspace orthogonal to the retain points, it harms progress in the span of the retain set points. Therefore, in the large $\lambda$ regime, stochastic \gd must operate at small learning rates to avoid disrupting the model parameters in the span of the retain set.

\begin{figure}[H]
 \centering
    \begin{tabular}{cc}
        \begin{subfigure}[b]{0.45\textwidth}
         \captionsetup{skip=0pt} %
            \includegraphics[width=\textwidth]{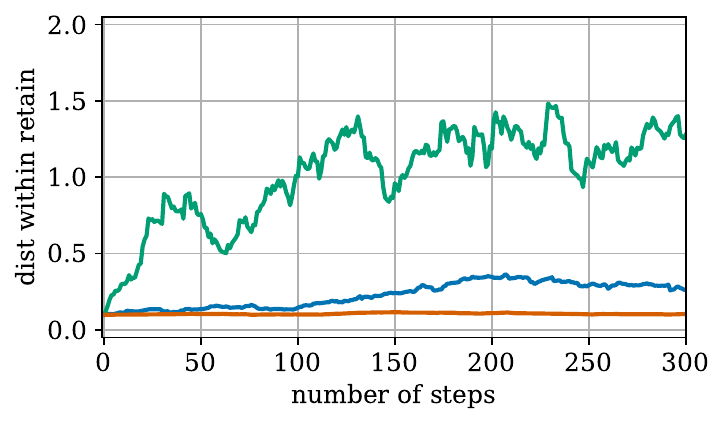}
            \caption{}
            \label{fig:lr_sgd_large_lambda_ret}
        \end{subfigure} &
         \begin{subfigure}[b]{0.45\textwidth}
         \captionsetup{skip=0pt} %
            \includegraphics[width=\textwidth]{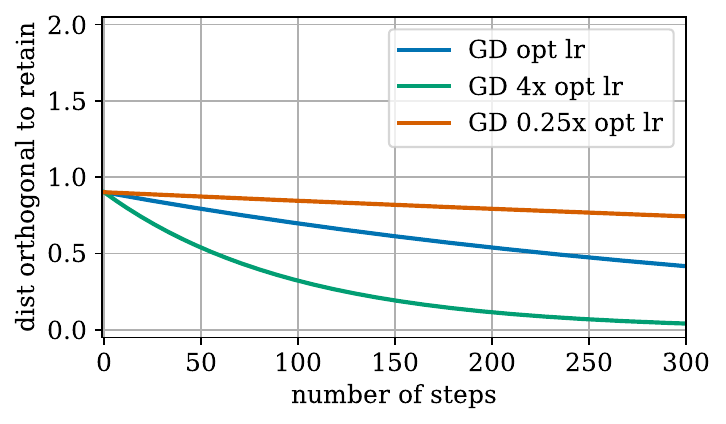}
            \caption{}
    \label{fig:lr_sgd_large_lambda_ret_orth}
        \end{subfigure}\\       
    \end{tabular}
    \caption{Stochastic Gradient Descent performance in the large $\lambda$ regime with varying learning rates. (a) The y-axis shows the relative squared distance to the optimal unlearned parameter within the span of retain set points, $\frac{||P_{\text{retain}}(\theta_t - \theta_{})||_2^2}{||\theta_\text{full} - \theta_{*}||_2^2}$, where $\theta_t$ is the iterate at time $t$, $\theta_{\text{full}}$ is the model trained on the full dataset, $\theta_*$ is the optimal unlearned model, and $P_{\text{retain}}$ is the projection matrix onto the retain set span. Larger learning rates lead to divergence within this span. (b) The y-axis shows the relative squared distance to the optimal unlearned parameter in the subspace orthogonal to the retain set points, $\frac{||P_{\text{orth-retain}}(\theta_t - \theta_{*})||_2^2}{||\theta_\text{full} - \theta_{*}||_2^2}$, where $P_{\text{orth-retain}}$ is the projection matrix for the orthogonal subspace. Larger learning rates result in faster convergence in this subspace.}
    \label{fig:lr_sgd_large_lambda}
\end{figure}

\subsection{Additional experiments}
\label{sec:linear_exp_examples}

In \Cref{sec:linear_models}, we discussed an example with linear models that highlighted the qualitative differences between oracle matching and other unlearning methods. Here, we show the comparison for another example with different covariance structure. 
We draw $100$ training points $(x_i, y_i)$ where $x_i$ are drawn i.i.d. from $N(0, \Sigma)$ in $400$ dimensions where covariance matrix $\Sigma = diag(1, 1/2, 1/3, \cdots, 1/400)$ ( 
$diag(.)$ represents a diagonal matrix with the specified entries on the diagonal). $y_i = \theta^T x_i + \epsilon_i$, where $\theta$ is drawn from $N(0, I)$ and $\epsilon_i$ is drawn from $N(0, 1/4)$. These $100$ points form $(X_\text{full}, y_\text{full})$. We fit these points to minimize the ridge regression objective with $\lambda = 1/4$ (for the small $\lambda$ case) or $\lambda = 5$ (for the large $\lambda$ case), to obtain the  model $\theta_\text{full}$. Here $\lambda = 1/4$ is the $\lambda$ value that minimizes the expected squared prediction error. We want to unlearn $5$ training points chosen uniformly at random. We show the performance of various methods (in both the stochastic and full-batch setting with small and large $\lambda$) in Figure \ref{fig:skewed_cov_linear_model}. Even here, we obtain the same qualitative patterns as in Figure \ref{fig:skewed_cov_linear_model}.

\begin{figure}[H]
 \centering
    \includegraphics[width=0.8\textwidth]{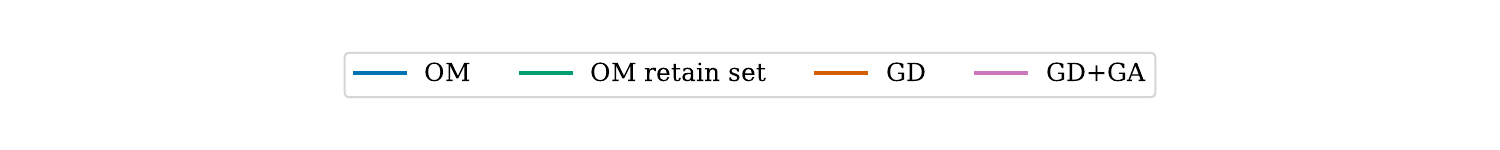} %
    \begin{tabular}{cc}
        
        \begin{subfigure}[b]{0.45\textwidth}
         \captionsetup{skip=0pt} %
            \includegraphics[width=\textwidth]{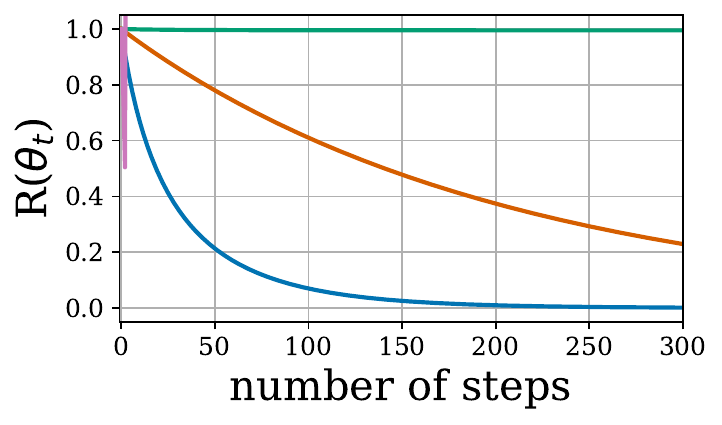}
            \caption{small $\lambda$, full batch}
        \end{subfigure} &
         \begin{subfigure}[b]{0.45\textwidth}
         \captionsetup{skip=0pt} %
            \includegraphics[width=\textwidth]{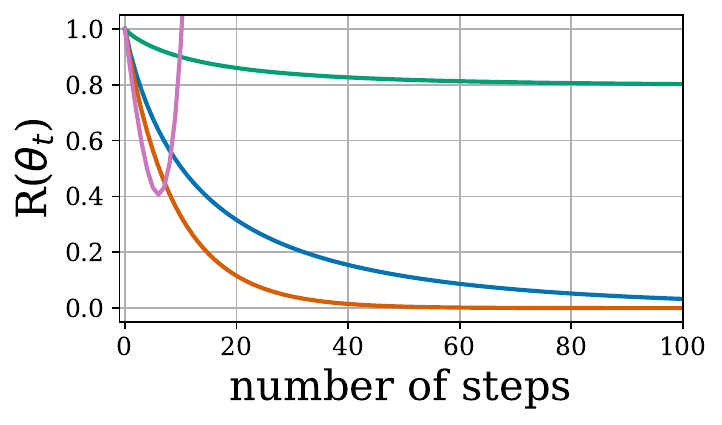}
            \caption{large $\lambda$, full batch}
        \end{subfigure}\\

        \begin{subfigure}[b]{0.45\textwidth}
         \captionsetup{skip=0pt} %
            \includegraphics[width=\textwidth]{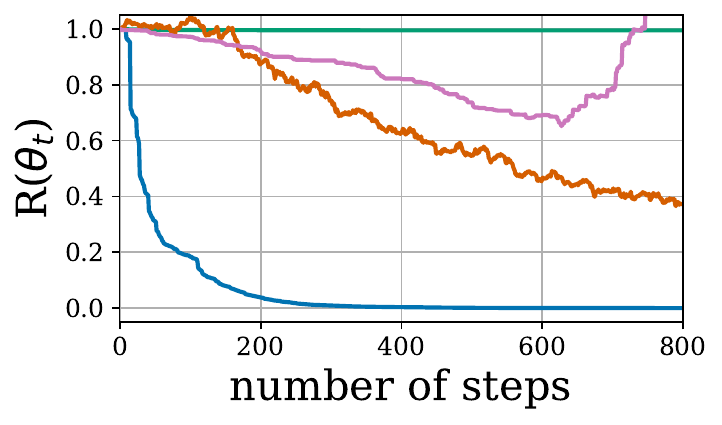}
            \caption{small $\lambda$, stochastic}
        \end{subfigure} &
        \begin{subfigure}[b]{0.45\textwidth}
         \captionsetup{skip=0pt} %
            \includegraphics[width=\textwidth]{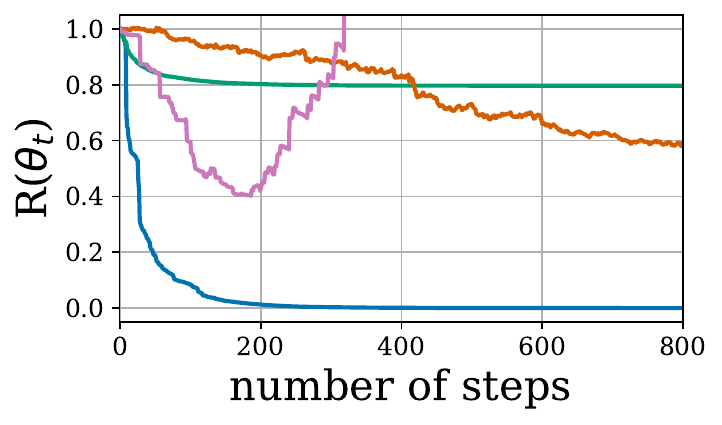}
            \caption{large $\lambda$, stochastic}
        \end{subfigure} \\
        
    \end{tabular}
    \caption{Another example comparing the performance of various unlearning methods with linear models.}
    \label{fig:skewed_cov_linear_model}
\end{figure}

\newpage

\section{Unlearning evaluation}
\label{app:eval_details}

\subsection{KL Divergence of Margins (\klom) } \label{appendix:klom}

Below we formally define the \klom evaluation, which computes the distance between the distribution of outputs for unlearned models and re-trained models.  For output, we use the classification margin. We also include a visual representation of our algorithm in Figure \ref{fig:klom-diagram}.

\begin{algorithm}
\caption{\klom} \label{alg:klom}
    \begin{algorithmic}[1]
    \State  \textbf{Input} Number of models $N$, 
    dataset $D$, forget set  $F \subseteq D$, 
    retain set $R\subseteq D$ (such that $D = F \cup R$), and a validation dataset $V$, training algorithm $A$, unlearning 
    algorithm $U$, margin function $\phi$, and histogram function $H(S) $ .
    \State Train \(N\) models (\textit{Oracles}) on the entire dataset, excluding the forget set. $\Theta^o = \{A(D\setminus F) \}$, $|\Theta^o| = N$.
    \State Train \(N\) models (\textit{Unlearned-models}) on the  entire dataset, then for each model unlearn forget set $F$, $\Theta^f = \{U(A(D), F) \}$, $|\Theta^f| = N$.
    \State Initialize a vector of results with all zeros $\vec{r}$. 
    \For{each point \(x\) in \{Forget, Retain, Validation\} set}
        \State Compute the margins for each oracle $M_o = \{\phi(\theta^o_i(x)) | \theta^o_i \in \Theta^o \}$.
        \State Compute the margins for each unlearned-model $M_f = \{\phi(\theta^f_i(x)) | \theta^f_i \in \Theta^f \}$
        \State Assign $\vec{r}[x] = KL(Hist(M_o), Hist(M_f))$
    \EndFor
    \State return $\vec{r}$
    \end{algorithmic}
\end{algorithm}

 \noindent Note that in order to approximate the KL divergence, we compute a histogram $H(S)$ that takes a set of real numbers and returns an empirical probability distribution by truncating and binning samples from $S$.

\begin{figure}[H]
    \centering
    \includegraphics[width=0.5\linewidth]{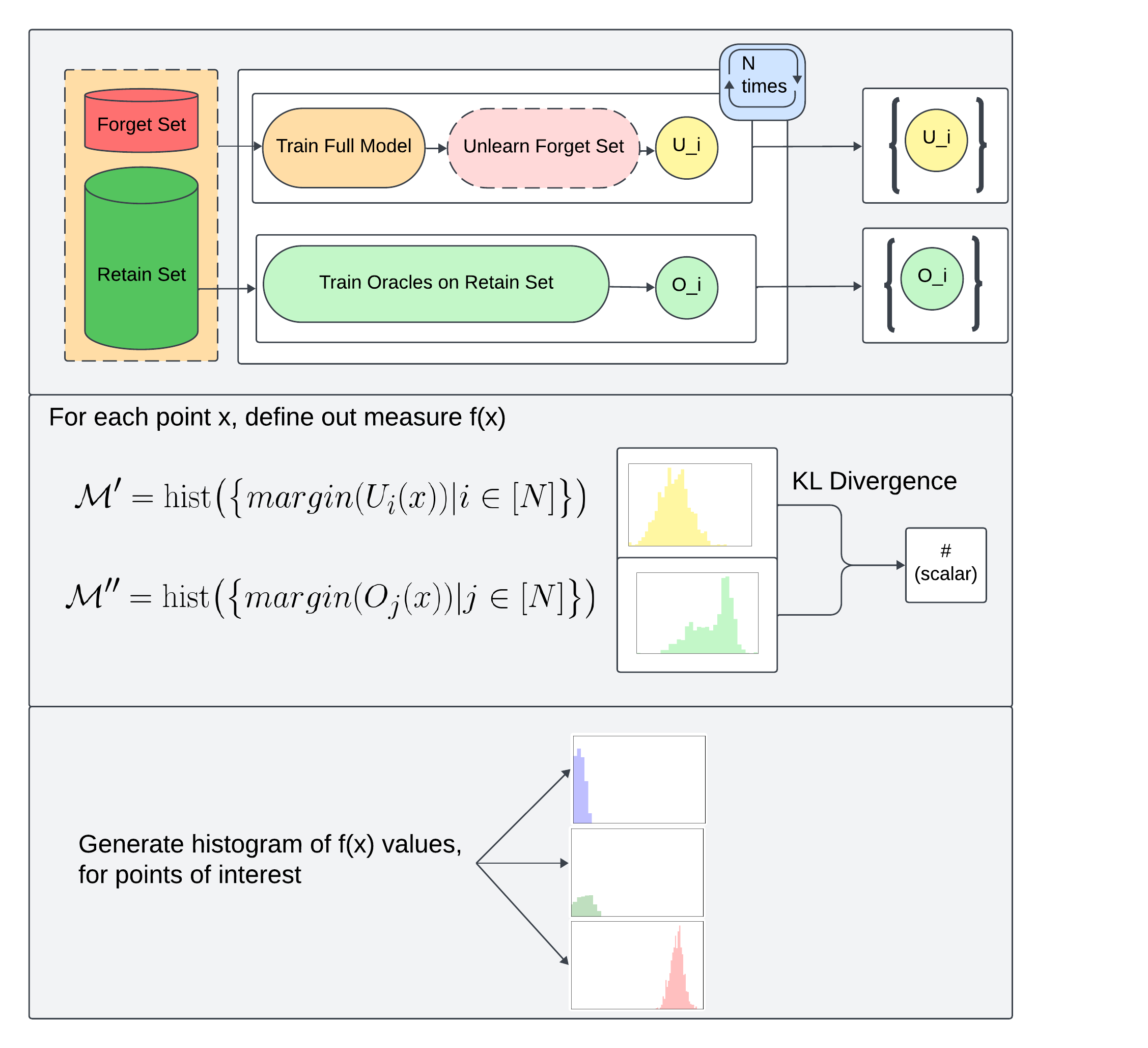}
    \caption{Visual diagram of \klom }
    \label{fig:klom-diagram}
\end{figure}

In practice, we compute \klom using $N=100$ oracles compared to $N=100 $ unlearned models. For each point we evaluate we clip the $N$ margins to a range of $[-100,100]$ to exclude outliers (some methods, like SCRUB, result in extremely large margins) into 20 bins. Then the KL-divergence is computed between the binned histogram of the oracles and the binned histogram of the unlearned models. For any region of the support where one histogram has no support, and the other does, the bin with no support is set to a non-zero probability of $\epsilon= 10^{-5}$. We note, that for this reason, \klom scores are artificially rescaled and capped at value around $\approx 12$. This cap can be changed by changing the number of bins or the minimum value $\epsilon$. 

\subsection{\ulira} \label{appendix:ulira}

At a high-level \ulira measures the distinguishability of predictions of an
unlearned model from that of retrained models based on adapting membership inference attacks. %
Implementing the \Cref{algo:ulira}, as written, would be computationally infeasible in most cases, as it involves unlearning and retraining $T$ times for each point $(x,y)$. In practice, \citet{hayes2024inexact} computes \ulira using the method \Cref{alg:eff_ulira}.

Omitting a few details, the main idea is as follows: consider an unlearning
    algorithm $\mathcal{U}$ and a training algorithm $\algo$.
    Start from a model $\theta_0$ trained on a random subset $S \sim \mathcal{P}^n$, and unlearn one specific (random) forget set, to produce $\theta_{F}$. Next, construct a collection of shadow models by that producing many unlearned models from random training sets and random forget sets. Lastly for a collection of points in the retain set and in the forget set ($\{x| x \in S_F \cup S \}$) compare how $\theta_{F}$ compares to the subset of shadow models that never-saw-$x$ distribution of margins for models that unlearned-$x$. %
    Now if $\mathcal{U}$ perfectly unlearns $S_F$, then $\theta_0$ no longer depends on $S_F$, and so even \emph{conditioned} on $\theta_0$ the marginal distribution of $x \in S_F|\theta_0$ is still $\mathcal{P}$; the same distribution as any $x \in S_V$. Operationalizing this intuition, \ulira with probability $\frac{1}{2}$ draws either $x \in S_F$ or $x \in S_V$, and measures the output $y = f_x(\theta_0)$. An (optimal) adversary observes $y$, and tries to guess
    whether the corresponding $x$ was an unlearned point or a validation point, e.g. whether $x \in S_F$ or $S_V$. More generally, if $\mathcal{U}$ is an $(\epsilon, \delta)$-unlearning
    algorithm, the two distributions $y|x\in S_F, \theta_0$ and $y|x\in S_V, \theta_0$ would be
    $(\epsilon, \delta)$-indistinguishable by post-processing, and so even the optimal adversary
    couldn't have accuracy greater than $\frac{1}{2}e^{\epsilon} + \delta$.
    The optimal adversary can be implemented by training models 
    with/without $x$, 
    unlearning $S_F$, and then measuring the output $y$. 
    For a more detailed description of \ulira and overview of similar MIA-based approaches, we refer the reader to
    \cite{hayes2024inexact} (Section 4.2), 
    and we include the pseudocode for the computationally efficient
    version of this evaluation \EfficientULIRA in Appendix \ref{appendix:ulira}.

\begin{algorithm} 
    \caption{U-LiRA (LiRA adapted for machine unlearning) \citep{hayes2024inexact}}\label{algo:ulira}
    \textbf{Args:} model parameters to evaluate $\theta^*$, learning algorithm $A$, unlearning algorithm $U$, number of shadow models $T$, example $(x, y)$, logit function $\phi$, function that returns probabilities $f(\cdot, \theta)$ given model parameters $\theta$. \\
    \textbf{Observations:} $O \leftarrow \{\}, \hat{O} \leftarrow \{\}$ \\
    \textbf{while} $t \leq T$ \textbf{do}
    \begin{itemize}
        \item $D \leftarrow$ sample a dataset that includes $(x, y)$
        \item $\theta^0 \leftarrow A(D)$ \textit{train a model}
        \item $\theta' \leftarrow U(\theta^0, (x, y))$ \textit{unlearn $(x, y)$}
        \item $\theta'' \leftarrow A(D\setminus(x, y))$ \textit{retrain without $(x, y)$}
        \item $O[t] \leftarrow \phi(f(x; \theta'))$
        \item $\hat{O}[t] \leftarrow \phi(f(x; \theta''))$
    \end{itemize}
    \textbf{end while} \\
    $\mu, \sigma \leftarrow \textit{fit Gaussian}(O)$ \\
    $\hat{\mu}, \hat{\sigma} \leftarrow \textit{fit Gaussian}(\hat{O})$ \\
    $o \leftarrow \phi(f(x, \theta^*))$ \\
    $p_{\text{member}} \leftarrow \frac{N(o;\mu,\sigma^2)}{N(o;\mu,\sigma^2) + N(o;\hat{\mu},\hat{\sigma}^2)}$ \\
    \textbf{if} $p_{\text{member}} \geq \frac{1}{2}$ \textbf{then}
    \begin{itemize}
        \item \textbf{return} Predict $(x, y)$ is a member of training
    \end{itemize}
    \textbf{else}
    \begin{itemize}
        \item \textbf{return} Predict $(x, y)$ is not a member of training
    \end{itemize}
\end{algorithm}

\begin{algorithm}
\caption{Sub algorithm: Membership Prediction}
\label{alg:membership}
    \begin{algorithmic}[1]
    \State \textbf{Input}: Sets $A,B$ of real-valued numbers, and point $x\in \mathbb{R}$.
    \State $(\mu, \sigma \leftarrow \textrm{fit Gaussian}(A)$
    \State $(\hat{\mu}, \hat{\sigma} \leftarrow \textrm{fit Gaussian}(B)$
    \State \(p_{\textrm{member}} \leftarrow \frac{\mathcal{N}(x; \mu, \sigma^2)}{\mathcal{N}(x; \mu, \sigma^2) + \mathcal{N}(x; \hat{\mu}, \hat{\sigma}^2)}\)
    \If{\(p_{\text{member}} > \frac{1}{2}\)}
        \State \textbf{return} Predict \((x, y)\) is a \textit{member} of set $A$
    \Else
        \State \textbf{return} Predict \((x, y)\) is \textit{not a member} of set $A$
    \EndIf
    \end{algorithmic}
\end{algorithm}

\begin{algorithm}
\caption{\EfficientULIRA }
\label{alg:eff_ulira}
\begin{algorithmic}[1]
\State \textbf{Input} Number of base models N, set of forgettable points $F$ (default: all points of class 5), Number of random forget sets per base model $n_f$,  size of each forget set $m$ (default 200)

\State Train $N$ base models on random 50\% subsets of the dataset

\For{each base model \(\mathcal{\theta}_i\)}
    \State construct $n_f$ random forget sets of size $m$, denoted $F_{i,j}$.
    \For{each random forget set \(F_{ij}\)}
        \State Unlearn forget set \(F_{ij}\)
    \EndFor
\EndFor
\State Split the $n_f \cdot N$ unlearned models into two sets, Shadow models \(S\) and Target models \(T\)
\State initialize accuracy vector $\vec{a}\in \mathbb{R}^{|T|}$, with all 0's.
\For{each target model \(\mathcal{\theta_t} \in T\)}
    \State Construct $D_f$ from $m$ from the $m$ points that $\theta_t$ unlearned
    \State Construct $D_v$ from $m$ from the $m$ points that $\theta_t$ was not trained on.
    \State let $D= D_f \cup D_v$ 
    \State Let $c=0$
    \For{each point \(x \in D\)}
        \State Let \(S_A\) be the set of shadow models that were trained on \(x\).
        \State Let \(S_B\) be the set of shadow models that unlearned \(x\)
        \State Construct sets $A = \{\theta'(x)| \theta' \in S_A \}$, $B = \{\theta'(x)| \theta' \in S_B \}$
        \State Run sub-algorithm Membership Prediction \ref{alg:membership} with inputs $(A,B,x)$, returning $l \in \{1,0\}$
        \If ($l=1$ and $x \in D_f$) OR   ($l=0$ and $x \in D_r$)
            \State $c +=1$
        \EndIf
        \State 
    \EndFor
    \State Average the model accuracy across all predictions in $D$, $a_t = c / (2m)$
\EndFor
\State Average the accuracy-per-model over all the target models Return $\textrm{mean}(\vec{a})$
\end{algorithmic}
\end{algorithm}

In the original \ulira paper \citep{hayes2024inexact}, they report results for Efficient \ulira for $N = 256$, forgettable points $F$ all points of class $5$, $n_f=40$ random forget sets per base model, and each forget set $m =20$. The $N$ base models that they use are trained on ResNet-18 for 100 epochs and so are highly overparameterized.

For our evaluation, \ulira paper \citep{hayes2024inexact}, we run Efficient \ulira for N= 50, $n_f=40$ random forget sets per base model, and then we vary the training setting, the forget size $n_f$, and total set of forgettable points $F$. Specifically, we evaluate three settings:
\begin{enumerate}
    \item ResNet-18 trained for 100 epochs, evaluated on forget sets of size 200, with the forgettable points $F$ being 1000 points in class 5.
    \item ResNet-9 trained for 25 epochs, evaluated on forget sets of size 200, with the forgettable points $F$ being 1000 points in class 5.
    \item ResNet-9 trained for 25 epochs, evaluated on forget sets of size 50, with the forgettable points $F$ being 500 random points in the dataset.
\end{enumerate}

\subsection{Comparing \ulira to \klom} \label{appendix:ulira_vs_klom}

We elaborate on why \klom is a better metric than existing evaluations, including \ulira. The advantages of \klom are:

\begin{enumerate}
    \item \klom requires fewer unlearning trials (on the order of 100) than \ulira (which is generally on the order of 2000).
    \item \klom returns a distribution of differences, rather than a binary assignment of if one particular model was more like an unlearned model or a retrained model. This is valuable because it informs how a method unlearns individual points (e.g. is bad on average, or just bad on specific points)
    \item \klom does not assume the margins can be fit well by a Gaussian. Empirically, for \ulira, we find this is a reasonable but imperfect assumption, and it is currently unclear how much error is introduced by this assumption.
    \item \klom has the capacity to look at an unlearning algorithm's ability to handle coherent sets of points, not just random subsets of a set of forgettable points.
    \item \ulira does not capture closeness to unlearned model, and thus one can force \ulira score to go down (implying better unlearning) by having the unlearning method destroy the original model, thus \ulira scores must be traded-off against an accuracy drop. \klom measures distance directly, and thus unlearning can be evaluated with a single measure. We expand on this point below. And is illustrated by figure \ref{fig:ulira_vs_acc_plot}.
\end{enumerate}

\begin{figure} 
    \centering
    \includegraphics[width=0.65\linewidth]{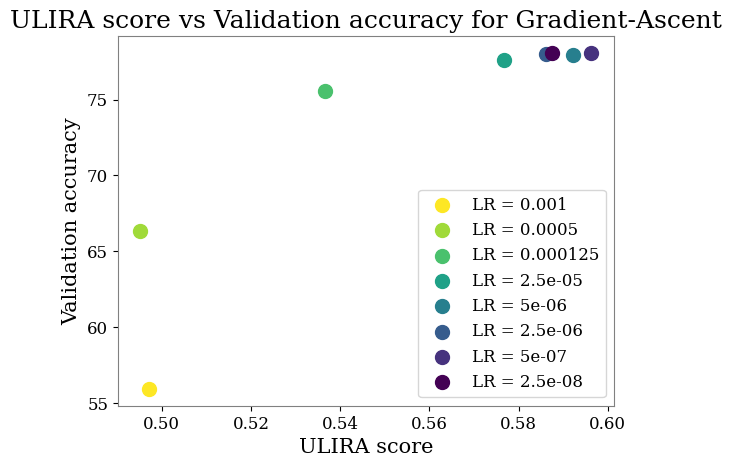}
    \caption{\ulira accuracy vs Validation accuracy of the unlearned model, for Gradient Ascent method, for a model trained on CIFAR-10. Unlearning occurs on a random forget sets of size 200 in class 5. Observe that it is possible to achieve nearly perfect \ulira accuracy (50\%) by simply dropping the validation accuracy of the model by 20\%. With Gradient Ascent specifically, this is achieved by increasing the learning rate beyond the optimal learning rate, causing the gradient ascent updates to be too significant.}
  \label{fig:ulira_vs_acc_plot}
\end{figure}

\paragraph{A toy example where \ULIRA fails.}
Consider your unlearning method returns a constant function $f$ (e.g. such that that the margin is always some constant, e.g. $7$). In such a situation, the distribution of unlearned models will look radically different from the distribution of models that were fully retrained; in theory, a good unlearning measure should return that this is a bad unlearning method. However, \ulira will actually return a nearly perfect unlearning score ($\approx 50\%$). In \ulira, one does 2 sets of likelihood ratio tests, first on points the unlearned model unlearned, then on points the unlearned model never saw.
The first set of likelihood ratio tests, \ulira will get 100\% (or nearly), because at margin is constant ($7$),  and so it will be at the unlearned-models peak; thus we get 100\% accuracy on these points. 
Now, on set 2, \ulira will get 0\% (or nearly), because \ulira compute the likelihood ratio at the margin of the unlearned model's prediction, which in this case is a constant ($7$), thus it will still be at the unlearned-models peak. Thus, \ulira will not predict that anything is ``not-in-the-training-set,'' achieving 0\% on this set. 
However, the problem is that 100 +0 /2 = 50\%, which appears to be a perfect unlearning score, but in reality is just a classifier that thinks everything is from an unlearned model.

\paragraph{Advantages of \ulira:}
\begin{enumerate}
    \item It is worth noting that our measure requires binning, and some assumptions there. \ulira does not (because it makes the Gaussian approximation assumption)
    \item \ULIRA reports an accuracy for one particular model, and then averages that across multiple unlearning models. This correctly captures individual model performance, rather than aggregate unlearning algorithm performance.
\end{enumerate}

\subsection{Sensitivity of unlearning to models and forget sets} \label{appendix:forget_set_sensitivity}

\begin{figure}[!hb]
    \centering
    \includegraphics[width=0.85\linewidth]{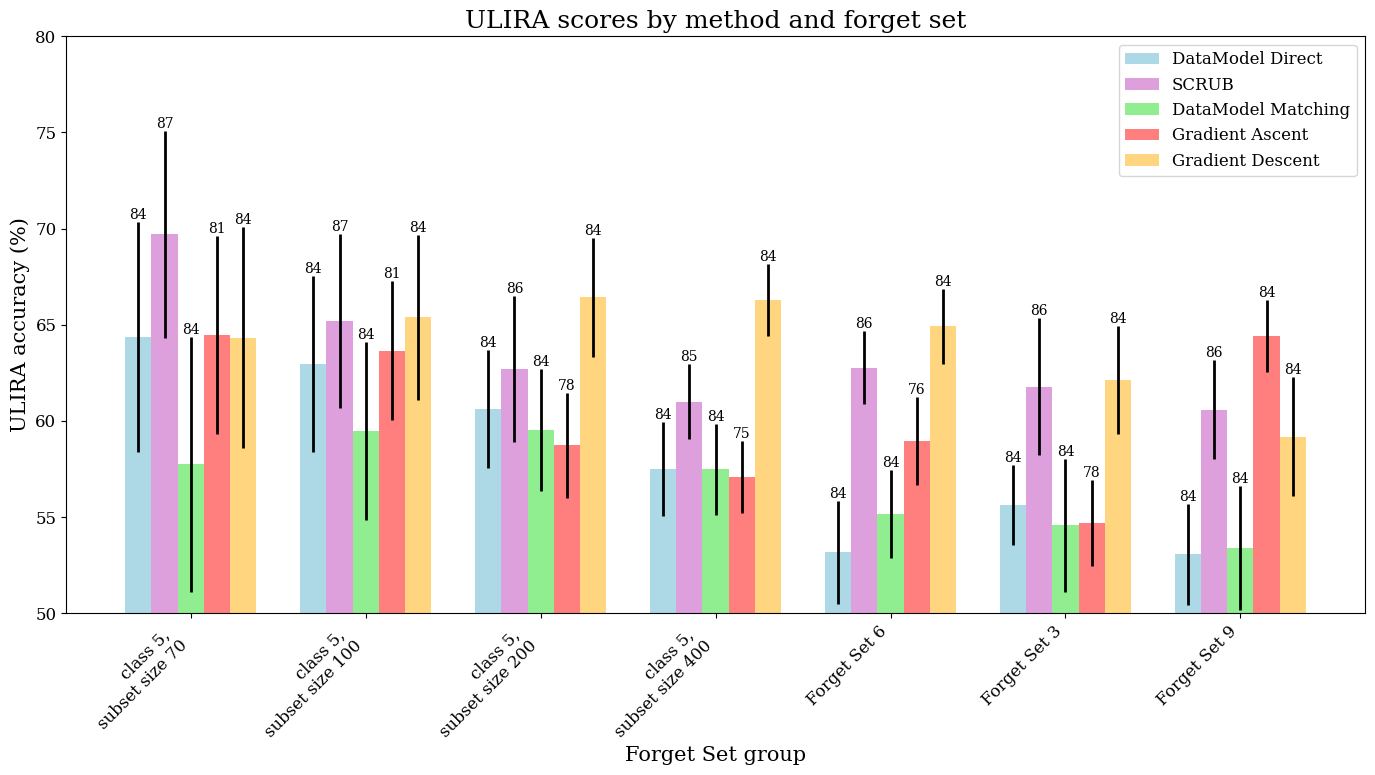}%
    \caption{Results for \EfficientULIRA run with parameters specified in \ref{appendix:ulira}. The validation accuracies are presented above the bar for the each method-forget set pairing. Observe that Gradient Ascent method is able to achieve low \ulira scores, but does so at the cost of the validation accuracy of the unlearned model.}
            \label{fig:ULIRA-results-all}
\end{figure}

\newpage
\section{Additional results}

\subsection{Hyperparameter sensitivity of unlearning algorithms} \label{appendix:hyperparam-sensitivity}
We compare the stability of SCRUB, a gradient-ascent based algorithm, and Oracle Matching.
In particular, we analyze their sensitivity to varying the learning rate in Figure \ref{fig:learning_rate_sensitivity}, where  \klom scores are plotted as a function of learning rate.\footnote{We show the 85th percentile for KLOM scores, because SCRUB generally performs significantly worse than Oracle Matching, and we found that for higher percentiles, the effect of learning rate was weaker because SCRUB never achieved particularly good KLOM scores in the first place.}

\begin{figure}[H]
    \centering
    \includegraphics[width=1.\linewidth]{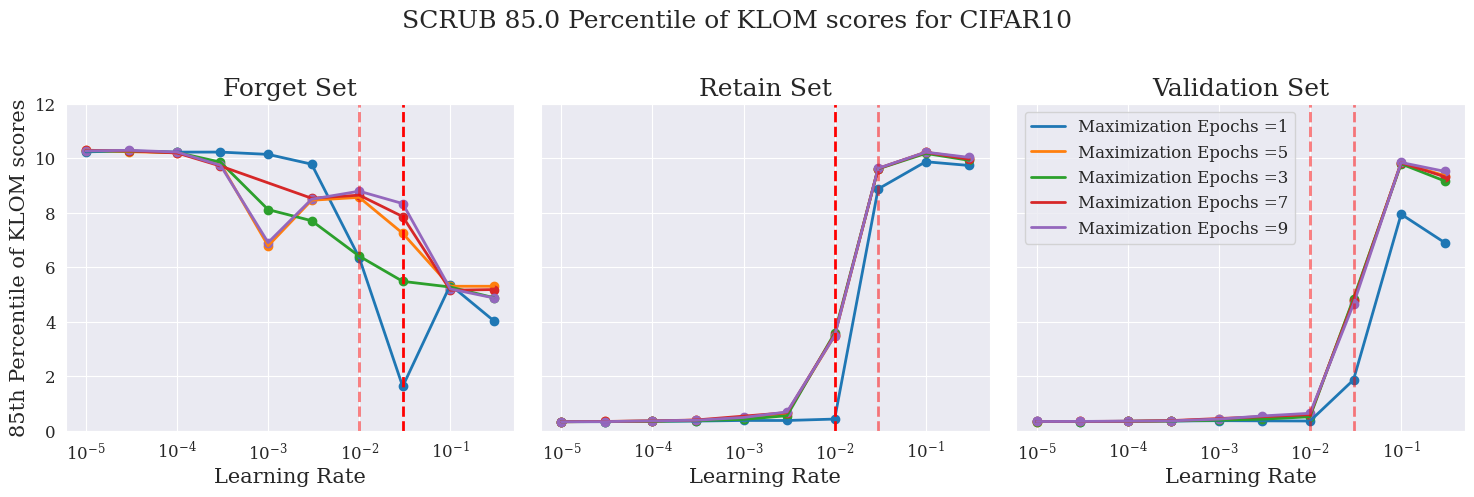}
    \includegraphics[width=1.\linewidth]{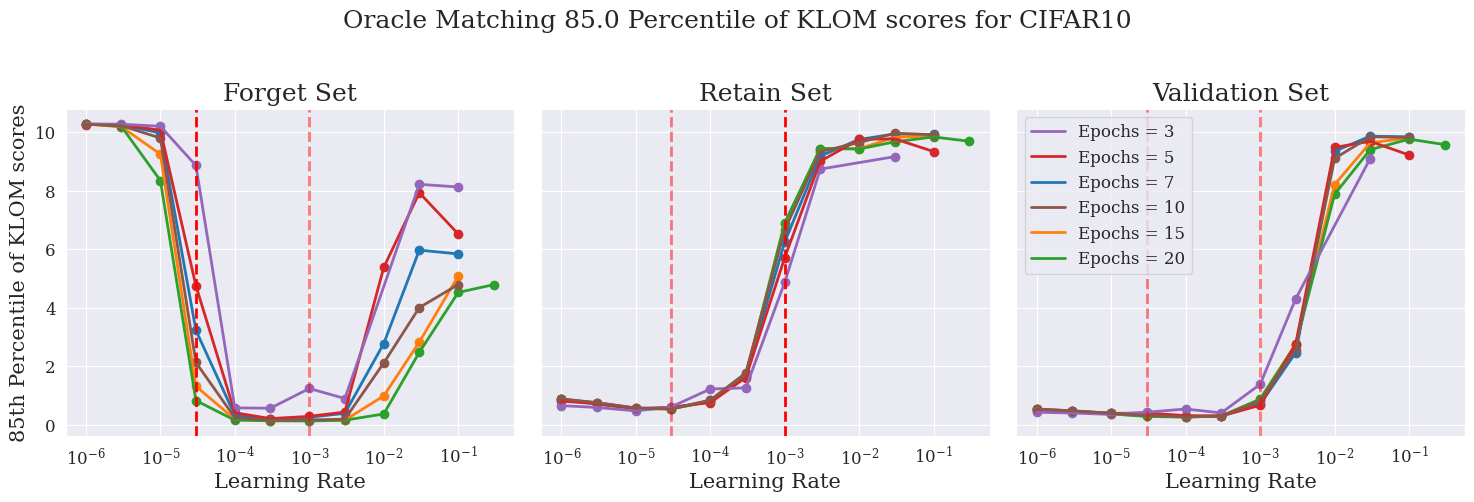}    

    \caption{\small{KLOM sensitivity to learning rate. SCRUB is trained for 10 epochs, with the first ``Maximization Epochs'' used for Gradient Ascent; Oracle Matching is trained for `Epochs', both demarcated in the legend. Both algorithms are evaluated on Forget Set 5, which is a non-random forget set of size 100. The red lines denotes the learning rate at which the KLOM score drops below 6 (an arbitrarily chosen threshold which we deemed as the largest value approaching a ``reasonable'' unlearning). Left red line indicates the largest forget that achieves a reasonable unlearning for Forget points, and the right red line indicates the smallest learning rate that achieves a reasonable unlearning for Validation and Retain points. The line is darker if that subset is the limiting factor for the thresholding.
    }}
    \label{fig:learning_rate_sensitivity}
\end{figure}

The key takeaway is that the range of suitable learning rates for SCRUB is much smaller (less than an order of magnitude) than that for Oracle Matching (which also performs significantly better). 
Note that because this is a log-scale, the range for the SCRUB learning rates is larger in absolute magnitude; however, the {\em relative} range of stable learning rates (between two dotted red lines) is much larger for Oracle Matching.

\subsection{Per-sample unlearning over time} \label{appendix:unlearning-over-training}

In \Cref{fig:unlearning-over-time-detailed}, we present the rate of unlearning individual data points as a function of epochs in the unlearning algorithm for both Oracle Matching and Datamodel matching, which we contrast with SCRUB (cf. Figure \ref{fig:individual_point_unlearning__scrub}).

\begin{figure}[!hb]
    \centering
    \includegraphics[width=0.95\linewidth]{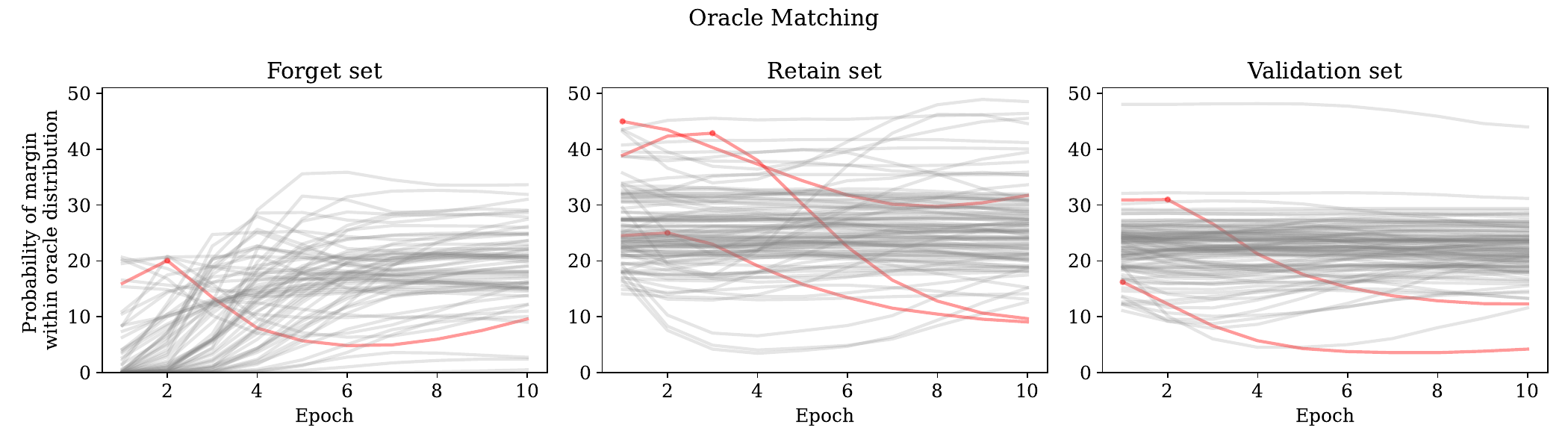}
    \includegraphics[width=0.95\linewidth]{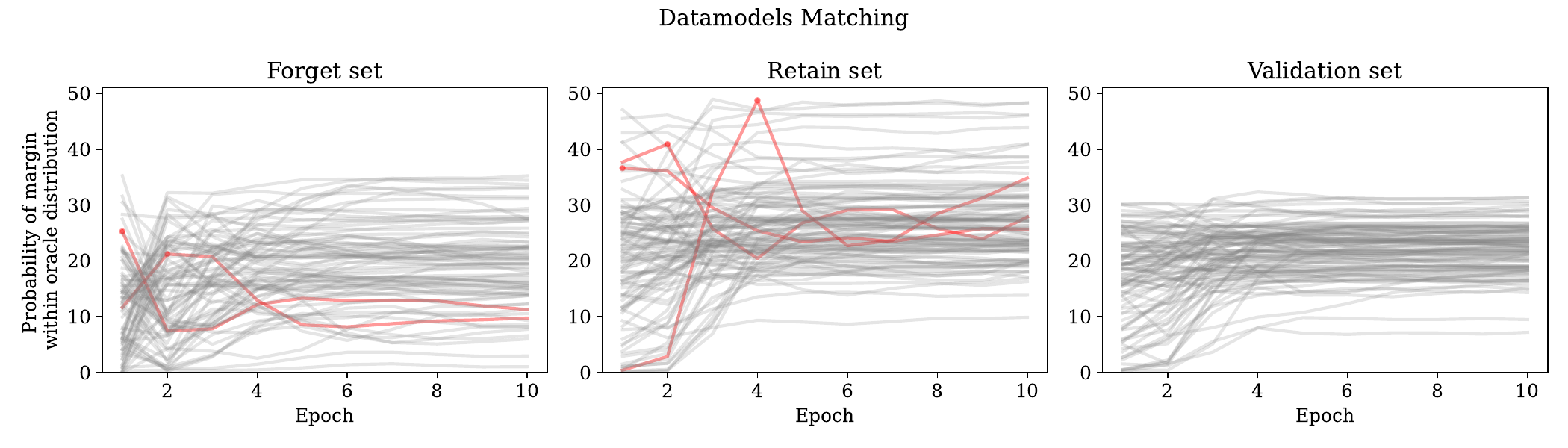}
    
        \caption{Plotting unlearning quality over time for Oracle Matching and Datamodels matching. We observe that different points unlearn at different rates. However, unlike for gradient ascent based methods like SCRUB, for gradient descent based methods like Datamodels matching and oracle matching, one a data point is unlearned, it tends to remain unlearned. The red lines highlight examples whose final quality of unlearning is more than 10\% worse than their maximum unlearning quality.}
    \label{fig:unlearning-over-time-detailed}
\end{figure}

\clearpage

\clearpage

\subsection{Full \klom evaluation} \label{appendix:klom_over_training_time}

We present the full \klom vs. time evaluations for all forget sets in \Cref{fig:klom-full,fig:klom-full-living}.
The pareto frontier (1st column) is computed separately for each forget set. This is because we observe that unlearning is quite model-specific and forget-specific; as such, it is reasonable that a practitioner should have a different set of hyperparameters for each unlearning depending on the forget set. Thus, having the pareto-frontier be specified per-forget-set gives each unlearning method the best opportunity to perform in that setting.
Observe that Gradient Descent tends to perform very well for the retain and validation sets, but leaves the forget set relatively unchanged. It is possible that unlearning the forget can be more effective with heavier use of regularization such as weight-decay. %

\begin{figure}[!h]
    \centering
    \includegraphics[width=0.8\linewidth]{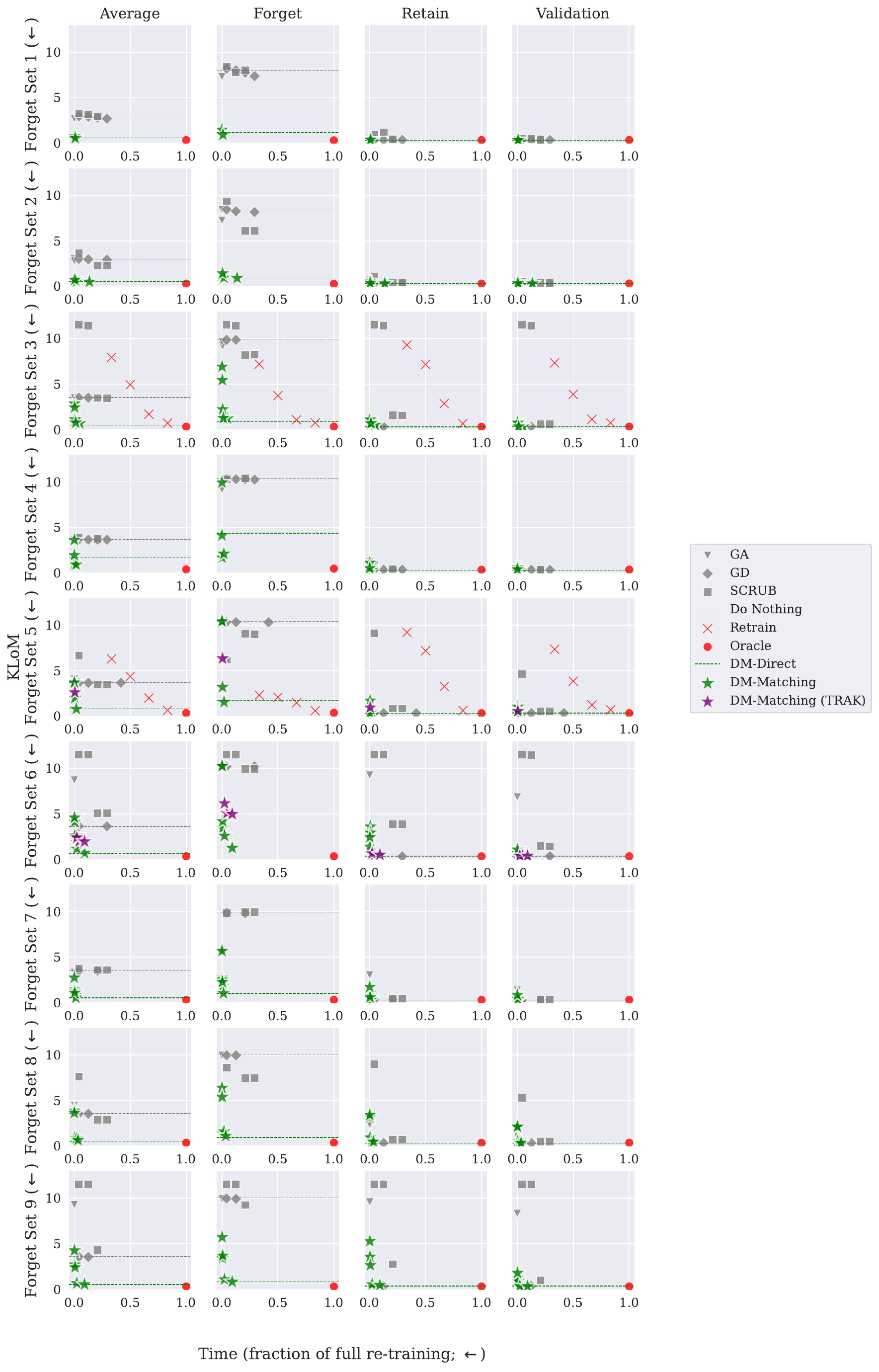}
    \caption{\klom~results for all forget sets 1-9 on CIFAR-10. The pareto frontier for each method is computed based on the 1st column (Average).}
    \label{fig:klom-full}
\end{figure}

\begin{figure}[!h]
    \centering
    \includegraphics[width=0.9\linewidth]{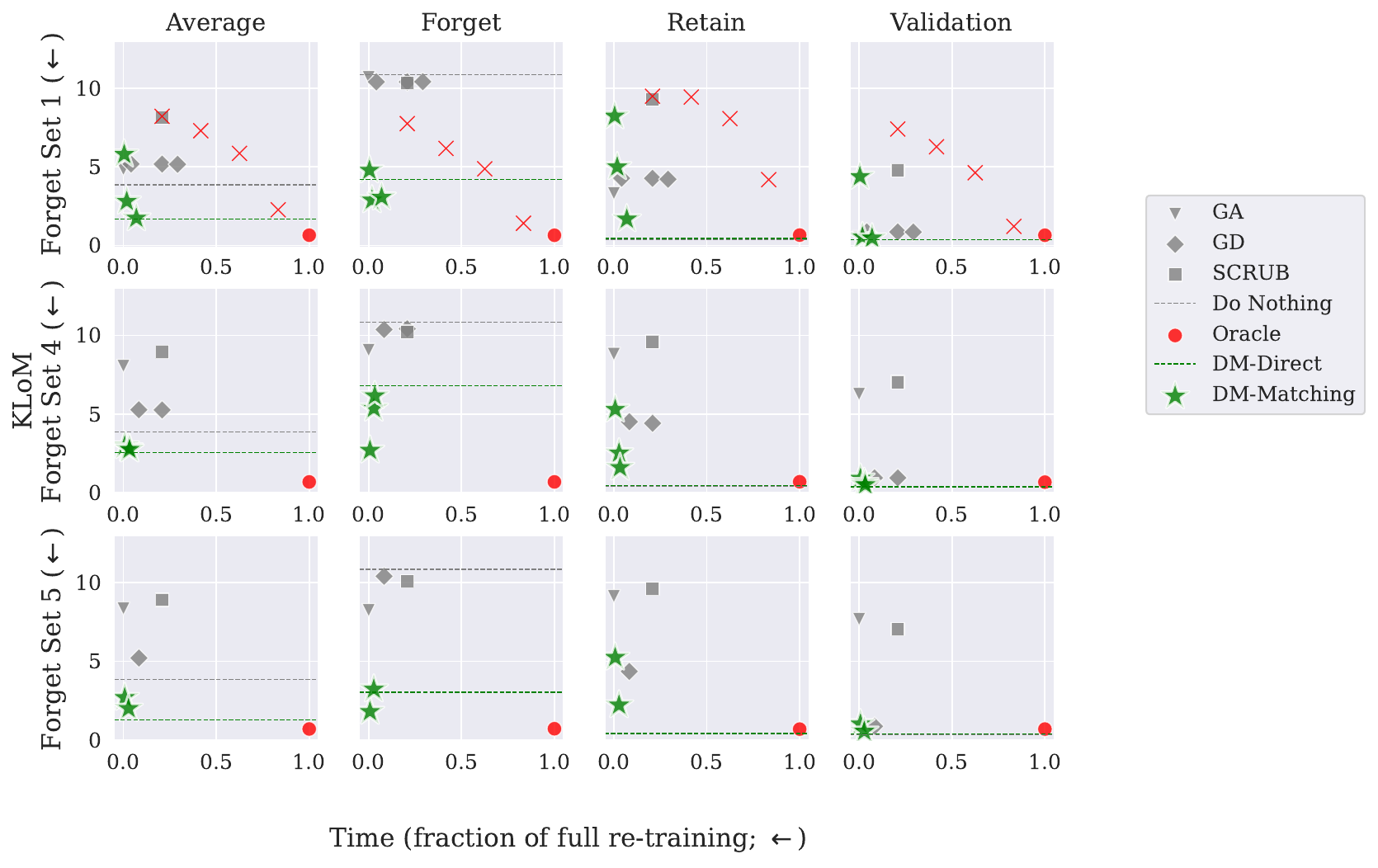}
    \caption{\klom~results for all forget sets 1-3 on Living-17. The pareto frontier for each method is computed based on the 1st column (Average).}
    \label{fig:klom-full-living}
\end{figure}

\subsection{Model accuracy after unlearning} \label{appendix:model-accuracy}

For some measures of unlearning it is possible to achieve high scores, while at the same time harming overall performance on the target task. Our measure of success, \klom, does not suffer from this problem, as a machine unlearning algorithm is measured on its ability to match the outputs of the oracle model. Hence, if an unlearning algorithm achieves a low \klom score that the oracle model has high accuracy, then the unlearned model must also have high accuracy by definition.  However, for completeness, we include a table of the model accuracies for each unlearning method (evaluated using the hyperparameters that achieves the lowest \klom~99th-percentile score on the forget set).

\begin{table}[ht]
\centering
\begin{tabular}{|c|c|c|c|c|c|c|}
\hline
    \textbf{Index} & \textbf{GA} & \textbf{GD} & \textbf{Scrub} & \textbf{Retrain an Oracle} & \textbf{DM-Direct} & \textbf{DM-Matching} \\
    \hline
    1 & 88.34 & 88.44 & 88.29 & 88.48 & 89.97 & 89.90 \\
    2 & 88.48 & 88.49 & 88.36 & 88.51 & 89.98 & 89.70 \\
    3 & 88.25 & 88.48 & 87.55 & 88.25 & 89.96 & 89.61 \\
    4 & 87.85 & 88.44 & 88.26 & 88.53 & 89.95 & 89.46 \\
    5 & 88.07 & 88.48 & 81.99 & 88.51 & 89.93 & 89.86 \\
    6 & 84.20 & 88.47 & 85.65 & 88.42 & 89.86 & 89.47 \\
    7 & 87.98 & 88.44 & 88.26 & 88.49 & 89.98 & 89.88 \\
    8 & 88.30 & 88.49 & 81.62 & 88.43 & 90.03 & 89.93 \\
    9 & 10.92 & 88.48 & 85.36 & 88.46 & 89.99 & 89.89 \\
    \hline
    \end{tabular}
    \caption{Model Accuracy Results}
    \label{tab:model-accuracy}
\end{table}

\end{document}